\newtheorem{theorem}{Theorem}
\newtheorem{lemma}{Lemma}
\newcommand{\ap}{\alpha_+}
\newcommand{\la}{\langle}
\newcommand{\rag}{\rangle\!_{{\raisebox{-3pt}{\scriptsize \em G}}}}
\newcommand{\pump}{p_{\mu_+}}
\newcommand{\phat}{\hat{p}}
\newcommand{\half}{\tfrac1{2}}
\newcommand{\onebyn}{\tfrac{\ones_n}{n}}
\newcommand{\F}{\mathcal{F}}
\newcommand{\R}{\mathbb{R}}
\newcommand{\tf}{{\tilde{f}}}
\newcommand{\tphi}{{\tilde{\phi}}}
\newcommand{\ones}{\mathbf{1}}
\newcommand{\zeros}{\mathbf{0}}
\title{Margins, Kernels and Non-linear Smoothed Perceptrons}
\author{
Aaditya Ramdas\\
Machine Learning Department\\
Carnegie Mellon University\\
\texttt{aramdas@cs.cmu.edu}
\and
Javier Pe\~na\\
Tepper School of Business\\
Carnegie Mellon University\\
\texttt{jfp@andrew.cmu.edu}
}
\begin{document} 

\maketitle

\begin{abstract} 
We focus on the problem of finding a non-linear classification function that lies in a Reproducing Kernel Hilbert Space (RKHS) both from the primal point of view (finding a perfect separator when one exists) and the dual point of view (giving a certificate of non-existence), with special focus on generalizations of two classical schemes - the Perceptron (primal) and Von-Neumann (dual) algorithms. 

We cast our problem as one of maximizing the regularized normalized hard-margin ($\rho$) in an RKHS and 
 rephrase it in terms of a Mahalanobis dot-product/semi-norm associated with the kernel's (normalized and signed) Gram matrix. We derive an accelerated smoothed algorithm with a convergence rate of $\tfrac{\sqrt {\log n}}{\rho}$ given $n$ separable points, which is strikingly similar to the classical kernelized Perceptron algorithm whose rate is $\tfrac1{\rho^2}$. When no such classifier exists, we prove a version of Gordan's separation theorem for RKHSs, and give a reinterpretation of negative margins. This allows us to give guarantees for a primal-dual algorithm that halts in $\min\{\tfrac{\sqrt n}{|\rho|}, \tfrac{\sqrt n}{\epsilon}\}$ iterations with a perfect separator in the RKHS if the primal is feasible or a dual $\epsilon$-certificate of near-infeasibility. 
\end{abstract} 

\section{Introduction}

We are interested in the problem of finding a non-linear separator for a given set of $n$ points $x_1,...,x_n \in \R^d$ with labels $y_1,...,y_n \in \{\pm 1\}$. Finding a linear separator can be stated as the problem of finding a unit vector $w \in \R^d$ (if one exists) such that for all $i$
\begin{equation}\label{prim1}
y_i (w^\top x_i) \geq 0 \mbox{\ \ \ \ \  i.e. \ \ \ sign}(w^\top x_i) = y_i.
\end{equation}
This is called the primal problem. In the more interesting non-linear setting, we will be searching for functions $f$ in a Reproducing Kernel Hilbert Space (RKHS) $\F_K$ associated with kernel $K$ (to be defined later) such that for all $i$
\begin{equation}\label{prim2}
y_i f(x_i) \geq 0 .
\end{equation}
We say that problems (\ref{prim1}), (\ref{prim2}) have an unnormalized margin $\rho > 0$, if there exists a unit vector $w$, such that for all $i$, 
\begin{equation*}
y_i (w^\top x_i) \geq \rho \mbox { \ \ or \ \ } y_i f(x_i) \geq \rho .
\end{equation*}
True to the paper's title, margins of non-linear separators in an RKHS will be a central concept, and we will derive interesting \textit{smoothed accelerated} variants of the Perceptron algorithm that have convergence rates (for the aforementioned primal and a dual problem introduced later) that are inversely proportional to the RKHS-margin as opposed to inverse squared margin for the Perceptron. 

The linear setting is well known by the name of linear feasibility problems - we are asking if there exists any vector $w$ which makes an acute angle with all the vectors $y_i x_i$, i.e.
\begin{equation}\label{lf}
(XY)^\top w > \zeros_n,
\end{equation} 
where $Y := diag(y), X := [x_1,...,x_n]$. This can be seen as finding a vector $w$ inside the dual cone of  $cone\{y_ix_i\}$. 

When normalized, as we will see in the next section, the margin is a well-studied notion of conditioning for these problems. It can be thought of as the width of the feasibility cone as in \cite{FV99}, a radius of well-posedness as in \cite{CC01}, and its inverse can be seen as a special case of a condition number defined by \cite{R95} for these systems.

\subsection{Related Work}

In this paper we focus on the famous Perceptron algorithm from \cite{R58} and the less-famous Von-Neumann algorithm from \cite{D92} that we introduce in later sections. As mentioned by \cite{EF00}, in a technical report by the same name, Nesterov pointed out in a note to the authors that the latter is a special case of the now-popular Frank-Wolfe algorithm.

Our work builds on \cite{SP12,SP13} from the field of optimization - we generalize the setting to learning functions in RKHSs, extend the algorithms, simplify proofs, and simultaneously bring new perspectives to it. There is extensive literature around the Perceptron algorithm in the learning community; we restrict ourselves to discussing only a few directly related papers, in order to point out the several differences from existing work. 

We provide a general unified proof in the Appendix which borrows ideas from accelerated smoothing methods developed by  \cite{N05} - while this algorithm and others by \cite{N04}, \cite{SVZ11} can achieve similar rates for the same problem, those algorithms do not possess the simplicity of the Perceptron or Von-Neumann algorithms and our variants, and also don't look at the infeasible setting or primal-dual algorithms.

Accelerated smoothing techniques have also been seen in the learning literature like in \cite{T08} and many others. However, most of these deal with convex-concave problems where both sets involved are the probability simplex (as in game theory, boosting, etc), while we deal with hard margins where one of the sets is a unit $\ell_2$ ball. Hence, their algorithms/results are not extendable to ours trivially. This work is also connected to the idea of $\epsilon$-coresets by \cite{C10}, though we will not explore that angle.

A related algorithm is called the Winnow by \cite{L91} - this works on the $\ell_1$ margin and is a saddle point problem over two simplices. One can ask whether such accelerated smoothed versions exist for the Winnow. The answer is in the affirmative - however such algorithms look completely different from the Winnow, while in our setting the new algorithms retain the simplicity of the Perceptron. 


\subsection{Paper Outline}

Sec.2 will introduce the Perceptron and Normalized Perceptron algorithm and their convergence guarantees for linear separability, with specific emphasis on the unnormalized and normalized margins. Sec.3 will then introduce RKHSs and the Normalized Kernel Perceptron algorithm, which we interpret as a subgradient algorithm for a regularized normalized hard-margin loss function.

Sec.4 describes the Smoothed Normalized Kernel Perceptron algorithm that works with a smooth approximation to the original loss function, and outlines the argument for its faster convergence rate. Sec.5 discusses the non-separable case and the Von-Neumann algorithm, and we prove a version of Gordan's theorem in RKHSs. 

We finally give an algorithm in Sec.6 which terminates with a separator if one exists, and with a dual certificate of near-infeasibility otherwise, in time inversely proportional to the margin. Sec.7 has a discussion and some open problems.

\section{Linear Feasibility Problems}

\subsection{Perceptron}
The classical perceptron algorithm can be stated in many ways, one is in the following form

\begin{algorithm}[]
   \caption{Perceptron}
\begin{algorithmic}
   \STATE Initialize $w_0 = 0$
   \FOR{$k=0,1,2,3,...$}
   \IF{sign$(w_k^\top x_i) \neq y_i$ for some $i$} 
   \STATE $w_{k+1} := w_k + y_i x_i$
   \ELSE
   \STATE Halt: Return $w_k$ as solution
   \ENDIF
   \ENDFOR
\end{algorithmic}
\end{algorithm}

It comes with the following classic guarantee as proved by \cite{B62} and \cite{N62}: 
 \textit{If there exists a unit vector $u \in \R^d$ such that $YX^\top u \geq \rho > 0$, then a perfect separator will be found in $\frac{ \max_i \|x_i\|^2_{2}}{\rho^2}$ iterations/mistakes.}

The algorithm works when updated with any arbitrary point $(x_i,y_i)$ that is misclassified; it has the same guarantees when $w$ is updated with the point that is misclassified by the largest amount, $\arg\min_i y_i w^\top x_i$. Alternately, one can define the probability distribution over examples
\begin{equation} \label{pofw}
p(w) = \arg\min_{p \in \Delta_n} \langle YX^\top w, p \rangle,
\end{equation}
where $\Delta_n$ is the $n$-dimensional probability simplex.

Intuitively, $p$ picks the examples that have the lowest margin when classified by $w$.
One can also normalize the updates so that we can maintain a probability distribution over examples used for updates from the start, as seen below:

\begin{algorithm}[]
   \caption{Normalized Perceptron}
\begin{algorithmic}
   \STATE Initialize $w_0 = 0, p_0 = 0$
   \FOR{$k=0,1,2,3,...$}
   \IF{$YX^\top w_k > 0$} 
   \STATE Exit, with $w_k$ as solution
   \ELSE
   \STATE $\theta_k := \frac{1}{k+1}$
   \STATE $w_{k+1} := (1-\theta_k) w_k + \theta_k XYp(w_k)$
   \ENDIF
   \ENDFOR
\end{algorithmic}
\end{algorithm}

\paragraph{Remark.} Normalized Perceptron has the same guarantees as perceptron - the Perceptron can perform its update \textit{online} on \textit{any} misclassified point, while the Normalized Perceptron performs updates on the \textit{most} misclassified point(s), and yet there does not seem to be any change in performance. However, we will soon see that the ability to see all the examples at once gives us much more power.

\subsection{Normalized Margins}

If we normalize the data points by the $\ell_2$ norm, the resulting mistake bound of the perceptron algorithm is slightly different. Let $X_2$ represent the matrix with columns $x_i/\|x_i\|_2$. Define the unnormalized and normalized margins as
\begin{eqnarray}
\rho &:=& \sup_{\|w\|_2 = 1} \inf_{p \in \Delta_n} \langle YX^\top w, p \rangle,  \nonumber\\
\rho_2 &:=& \sup_{\|w\|_2=1} \inf_{p \in \Delta_n} \langle YX_2^\top w, p \rangle.  \nonumber
\end{eqnarray}
\paragraph{Remark.} Note that we have $\sup_{\|w\|_2=1}$ in the definition, this is equivalent to $\sup_{\|w\|_2 \leq 1}$ iff $\rho_2 > 0$.

Normalized Perceptron has the following guarantee on $X_2$:
\textit{If $\rho_2 > 0$, then it finds a perfect separator in $\tfrac1{\rho_2^2}$ iterations.}
\paragraph{Remark.} Consider the max-margin separator $u^*$ for $X$ (which is also a valid perfect separator for $X_2$). Then 
\begin{eqnarray*}
\tfrac{\rho}{\max_i \|x_i\|_2} &=& \min_i \left( \frac{y_ix_i^\top u^*}{\max_i \|x_i\|_2} \right) \leq \min_i \left( \frac{y_i x_i^\top u^*}{\|x_i\|_2} \right)\\
&\leq& \sup_{\|u\|_2 = 1}\min_i \left( \frac{y_i x_i^\top u}{\|x_i\|_2} \right) = \rho_2.
\end{eqnarray*}
Hence, it is always better to normalize the data as pointed out in \cite{GHW01}. This idea extends to RKHSs, motivating the normalized Gram matrix considered later.

\textbf{Example} Consider a simple example in $\R^2_+$. Assume that $+$ points are located along the line $6x_2 = 8x_1$, and the $-$ points along $8x_2 = 6x_1$, for $1/r \leq \|x\|_2 \leq r$, where $r~>~1$. The max-margin linear separator will be $x_1=x_2$. If all the data were normalized to have unit Euclidean norm, then all the $+$ points would all be at $(0.6,0.8)$ and all the $-$ points at $(0.8,0.6)$, giving us a normalized margin of $\rho_2 \approx 0.14$. Unnormalized, the margin is $\rho \approx 0.14/r$ and $\max_i \|x_i\|_2 = r$. Hence, in terms of bounds, we get a discrepancy of $r^4$, which can be arbitrarily large.

\textbf{Winnow} The question arises as to which norm we should normalize by. 
There is a now classic algorithm in machine learning, called Winnow by \cite{L91} or Multiplicate Weights. It works on a slight transformation of the problem where we only need to search for $u \in \R^d_+$.
It comes with some very well-known guarantees -  \textit{If there exists a $u \in \R_+^d$ such that $YX^\top u \geq \rho > 0$, then feasibility is guaranteed in $\|u\|_1^2 \max_i \|a_i\|^2_\infty\log n / \rho^2$ iterations.}
The appropriate notion of normalized margin here is
$$
\rho_1 := \max_{w \in \Delta_d} \min_{p \in \Delta_n} \langle YX_\infty^\top w,p\rangle,
$$
where $X_\infty$ is a matrix with columns $x_i/\|x_i\|_\infty$. Then, the appropriate iteration bound is $\log n/\rho_1^2$. We will return to this $\ell_1$-margin in the discussion section. In the next section, we will normalize by using the kernel appropriately.

\section{Kernels and RKHSs}

The theory of Reproducing Kernel Hilbert Spaces (RKHSs) has a rich history, and for a detailed introduction, refer to \cite{SS02}. 
Let $K : \R^d \times \R^d \rightarrow \R$ be a symmetric positive definite  kernel, giving rise to a Reproducing Kernel Hilbert Space $\F_K$ with an associated feature mapping at each point $x \in \R^d$ called $\phi_x : \R^d \rightarrow \F_K$ where $\phi_x (.) = K(x,.)$ i.e. $\phi_x (y) = K(x,y)$. 
$\F_K$ has an associated inner product $\langle \phi_u, \phi_v \rangle_K = K(u,v)$. For any $f \in \F_K$, we have $f(x) = \langle f , \phi_x \rangle_K$. 

Define the normalized feature map 
$$
\tphi_x = \frac{\phi_x}{\sqrt{K(x,x)}} \in \F_K \mbox{ \ \ and \ \ } \tphi_X := [\tphi_{x_i}]_1^n.
$$
For any function $f \in \F_K$, we use the following notation  $$Y\tilde{f}(X) := \langle f, Y\tphi_X \rangle_K = [y_i \langle f,\tphi_{x_i} \rangle_K]_1^n = \Big[ \tfrac{y_i f(x_i)}{\sqrt{K(x_i,x_i)}} \Big]_1^n.$$
We analogously define the normalized margin here to be 
\begin{eqnarray}\label{margin}
\rho_K &:=&  \sup_{\|f\|_K = 1}  \inf_{p \in \Delta_n} \left \langle  Y\tf(X), p \right \rangle.
\end{eqnarray}
Consider the following regularized empirical loss function 
\begin{equation}\label{loss}
L(f) = \left \{ \sup_{p \in \Delta_n} \left \langle -Y\tf(X), p \right \rangle \right \} + \tfrac1{2} \|f\|_K^2.
\end{equation}
Denoting $t := \|f\|_K > 0$ and writing $f = t \left( \frac{f}{\|f\|_K} \right)  = t\bar{f}$, let us calculate the minimum value of this function
\begin{eqnarray}
\inf_{f \in \F_K} L(f) &=& \inf_{t > 0} 
\inf_{\|\bar{f}\|_K = 1}  \sup_{p \in \Delta_n}  \langle - \langle t\bar{f}, Y\tphi_X \rangle_K, p \rangle + \tfrac{t^2}{2} 
\nonumber\\ 
&=& \inf_{t>0} \left\{  -t \rho_K + \tfrac1{2} t^2 \right\}  \nonumber\\
&=& -\tfrac1{2}\rho_K^2 \mbox{ \ \ when $t=\rho_K>0$. } \label{minloss}
\end{eqnarray}
Since $\max_{p \in \Delta_n} \left \langle -Y\tf(X), p \right \rangle$ is some empirical loss function on the data and $\tfrac1{2} \|f\|_K^2$ is an increasing function of $\|f\|_K$, the Representer Theorem \citep{SHS01} implies that the minimizer of the above function lies in the span of $\phi_{x_i}$s (also the span of the $y_i\tphi_{x_i}$s). Explicitly,
\begin{eqnarray}
\arg\min_{f \in \F_K} L(f) =  \sum_{i=1}^n \alpha_i y_i\tphi_{x_i} = \langle Y\tphi_X, \alpha \rangle. \label{alpha}
\end{eqnarray}
Substituting this back into Eq.(\ref{loss}), we can define
\begin{eqnarray}
L(\alpha) &:=& \left \{ \sup_{p \in \Delta_n} \left \langle - \alpha, p \right \rag \right \} + \tfrac1{2} \|\alpha\|_G^2,\label{lossag}
\end{eqnarray}
where $G$ is a normalized signed Gram matrix with $G_{ii}=1$,
$$G_{ji} = G_{ij} := \tfrac{y_i y_j K(x_i,x_j)}{\sqrt{K(x_i,x_i)K(x_j,x_j)}} = \langle y_i\tphi_{x_i}, y_j\tphi_{x_j} \rangle_K, $$
and $\langle p, \alpha \rag := p^\top G\alpha$, $\|\alpha\|_G := \sqrt{\alpha^\top G \alpha}$. One can verify that $G$ is a PSD matrix and the G-norm $\|.\|_G$ is a semi-norm, whose properties are of great importance to us.

\subsection{Some Interesting and Useful Lemmas}

The first lemma justifies our algorithms' exit condition.
\begin{lemma}\label{Ga}
$L(\alpha) < 0$ implies $G\alpha > 0$ and there exists a perfect classifier iff $G\alpha > 0$. 
\end{lemma}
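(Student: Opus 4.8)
The plan is to first rewrite $L(\alpha)$ in a transparent form and then read off both claims. Since the minimum of a linear functional over the simplex $\Delta_n$ is attained at a vertex, $\sup_{p\in\Delta_n}\la -\alpha,p\rag = -\min_{p\in\Delta_n} p^\top G\alpha = -\min_i (G\alpha)_i$, so that
\begin{equation*}
L(\alpha) = -\min_i (G\alpha)_i + \tfrac12\|\alpha\|_G^2 .
\end{equation*}
Because $G$ is PSD we have $\|\alpha\|_G^2 = \alpha^\top G\alpha \ge 0$; hence $L(\alpha)<0$ forces $\min_i (G\alpha)_i > \tfrac12\|\alpha\|_G^2 \ge 0$, i.e. $G\alpha > 0$. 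This settles the first implication.

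For the equivalence I would use the identity $Y\tf(X) = G\alpha$, valid whenever $f := \la Y\tphi_X,\alpha\rangle = \sum_i \alpha_i y_i\tphi_{x_i}$: expanding $y_j\la f,\tphi_{x_j}\rangle_K = \sum_i \alpha_i \, y_iy_j\la \tphi_{x_i},\tphi_{x_j}\rangle_K$ and recognizing the coefficient of $\alpha_i$ as exactly $G_{ji}$ gives the claim in one line. With this in hand: if $G\alpha > 0$, then $f = \la Y\tphi_X,\alpha\rangle \in \F_K$ satisfies $Y\tf(X) = G\alpha > 0$, so $y_i f(x_i) > 0$ for all $i$ and a perfect classifier exists. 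Conversely, suppose some $f\in\F_K$ is a perfect classifier, i.e. $Y\tf(X) > 0$. Decompose $f = f_S + f_\perp$ where $f_S$ lies in the span $S$ of $\{y_i\tphi_{x_i}\}_1^n$ and $f_\perp\perp S$; this projection is legitimate because $S$ is spanned by finitely many vectors and is therefore a closed subspace. Each inner product $\la f,\tphi_{x_j}\rangle_K$ is unaffected by $f_\perp$, so $Y\tf(X) = Y\tilde f_S(X)$, and writing $f_S = \la Y\tphi_X,\alpha\rangle$ for a suitable $\alpha$ and applying the identity yields $G\alpha = Y\tilde f_S(X) = Y\tf(X) > 0$.

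The only points requiring care are bookkeeping: evaluating the simplex supremum correctly; noting that $\|\cdot\|_G$ is merely a semi-norm but still nonnegative since $G\succeq 0$; being explicit that ``perfect classifier'' here means strict separation $y_i f(x_i)>0$ for all $i$ (equivalently $\rho_K>0$), since otherwise $f\equiv 0$ would trivialize the statement; and justifying the orthogonal projection onto the finite-dimensional span $S$ (alternatively one may invoke the Representer Theorem together with the computation $\inf_f L(f) = -\tfrac12\rho_K^2$ already established in Eq.~(\ref{minloss})). No single step is a genuine obstacle; the crux is the identity $Y\tf(X)=G\alpha$, which reduces everything to the elementary simplex computation above.
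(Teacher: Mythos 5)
Your proof is correct, and for the forward implication ($L(\alpha)<0\Rightarrow G\alpha>0$) and the direction $G\alpha>0\Rightarrow$ perfect classifier it follows the paper's argument almost verbatim: evaluate the simplex supremum as $-\min_i(G\alpha)_i$, drop the nonnegative $\tfrac12\|\alpha\|_G^2$ term, and for the other direction use the identity $Y\tf(X)=G\alpha$ for $f=\la Y\tphi_X,\alpha\rangle$.

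Where you genuinely depart from the paper is the converse, ``perfect classifier exists $\Rightarrow$ $G\alpha>0$ for some $\alpha$.'' The paper argues via the loss function: perfect separability gives $\rho_K>0$, then Eq.~(\ref{minloss}) and the Representer Theorem guarantee a minimizer $\alpha^*$ with $L(\alpha^*)=-\tfrac12\rho_K^2<0$, and the first implication of the lemma applied to $\alpha^*$ yields $G\alpha^*>0$. You instead take the given perfect classifier $f$, orthogonally project it onto the (closed, finite-dimensional) span $S=\mathrm{span}\{y_i\tphi_{x_i}\}$, observe that the orthogonal component is invisible to every $\la\cdot,\tphi_{x_j}\rangle_K$, write $f_S=\la Y\tphi_X,\alpha\rangle$, and read off $G\alpha=Y\tf(X)>0$. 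This is a more elementary and self-contained route: it does not invoke the Representer Theorem, does not require that the infimum of $L$ is attained, and in effect re-derives exactly the piece of the Representer Theorem that the lemma needs. The paper's route is more economical in context because $\rho_K$, Eq.~(\ref{minloss}), and the Representer Theorem are already on the table and get reused; but as a stand-alone proof of this lemma your projection argument is arguably cleaner. Your parenthetical care about ``perfect classifier'' meaning strict separation $y_if(x_i)>0$ (so that $f\equiv 0$ is excluded) is warranted and consistent with the paper's usage, where the exit condition is $G\alpha>\zeros_n$ and Gordan's theorem is stated with strict inequality.
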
\vspace{-0.2in}
\begin{proof}
$L(\alpha) < 0 \Rightarrow \sup_{p \in \Delta_n} \left \langle - G\alpha, p \right \rangle  < 0 \Leftrightarrow G\alpha > 0$.
$G\alpha > 0 \Rightarrow f_{\alpha} := \langle \alpha, Y\tphi_X \rangle$ is perfect since
\begin{eqnarray*}
\frac{y_j f_\alpha(x_j)}{\sqrt{K(x_j,x_j)}}  &=& \sum_{i=1}^n \alpha_i  \frac{y_i y_j K(x_i, x_j)}{\sqrt{K(x_i,x_i)K(x_j,x_j)}}\\
& = & G_j \alpha > 0.
\end{eqnarray*}
If a perfect classifier exists, then $\rho_K>0$ by definition and 
$$L(f^*) ~=~ L(\alpha^*) = -\half \rho_K^2 < 0 \ \ \ \Rightarrow \ \ \  G\alpha > 0,$$
where $f^*, \alpha^*$ are the optimizers of $L(f), L(\alpha)$.
\end{proof}

\vspace{0.0in}The second lemma bounds the G-norm of vectors.
\begin{lemma}\label{glt1}
For any $\alpha \in \R^n$,  $\|\alpha\|_G \leq \|\alpha\|_1 \leq \sqrt{n}\|\alpha\|_2$.
\end{lemma}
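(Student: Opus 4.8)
The plan is to dispatch the two inequalities separately, as they are of quite different natures. The rightmost inequality $\|\alpha\|_1 \le \sqrt{n}\|\alpha\|_2$ is the standard comparison between $\ell_1$ and $\ell_2$ norms on $\R^n$: it follows immediately from the Cauchy--Schwarz inequality applied to $\langle |\alpha|, \ones_n \rangle$, giving $\|\alpha\|_1 = \sum_i |\alpha_i| \le \sqrt{n}\,\sqrt{\sum_i \alpha_i^2} = \sqrt{n}\|\alpha\|_2$. There is no obstacle here.

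For the left inequality $\|\alpha\|_G \le \|\alpha\|_1$, I would exploit the RKHS origin of $G$. Recall $\|\alpha\|_G^2 = \alpha^\top G\alpha = \big\langle \sum_i \alpha_i y_i \tphi_{x_i},\ \sum_j \alpha_j y_j \tphi_{x_j}\big\rangle_K = \big\| \sum_i \alpha_i y_i \tphi_{x_i} \big\|_K^2$, using $G_{ij} = \langle y_i\tphi_{x_i}, y_j\tphi_{x_j}\rangle_K$. Now apply the triangle inequality for $\|\cdot\|_K$ together with the fact that each $\tphi_{x_i}$ is a \emph{unit} vector, i.e. $\|y_i\tphi_{x_i}\|_K = \sqrt{K(x_i,x_i)/K(x_i,x_i)} = 1$ (equivalently $G_{ii}=1$). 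This yields $\big\| \sum_i \alpha_i y_i \tphi_{x_i} \big\|_K \le \sum_i |\alpha_i|\, \|y_i\tphi_{x_i}\|_K = \sum_i |\alpha_i| = \|\alpha\|_1$, and squaring (or taking square roots appropriately) gives $\|\alpha\|_G \le \|\alpha\|_1$.

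A purely matrix-algebraic alternative, if one prefers not to pass through $\F_K$, is to use that $G$ is PSD with unit diagonal, so every off-diagonal entry satisfies $|G_{ij}| \le \sqrt{G_{ii}G_{jj}} = 1$; then $\alpha^\top G\alpha = \sum_{i,j}\alpha_i\alpha_j G_{ij} \le \sum_{i,j}|\alpha_i||\alpha_j| = \|\alpha\|_1^2$. Either way the argument is short, and honestly there is no real ``hard part'' — the only thing to be careful about is invoking the normalization ($G_{ii}=1$, or unit-norm feature maps) at the right moment, since this is precisely what makes the constant in the bound equal to $1$ rather than $\max_i \sqrt{K(x_i,x_i)}$.
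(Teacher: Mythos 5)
Your primary argument for $\|\alpha\|_G \le \|\alpha\|_1$ --- identifying $\|\alpha\|_G$ with $\|\sum_i \alpha_i y_i\tphi_{x_i}\|_K$ and applying the triangle inequality in $\F_K$ together with $\|\tphi_{x_i}\|_K = 1$ --- is exactly the paper's proof, and the $\ell_1$/$\ell_2$ comparison is the standard Cauchy--Schwarz step the paper leaves implicit. Your matrix-algebraic alternative (PSD with unit diagonal, hence $|G_{ij}|\le 1$) is also correct, but the core route is the same as the paper's.
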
\vspace{-0.2in}
\begin{proof} Using the triangle inequality of norms, we get
\begin{eqnarray*}
\sqrt{\alpha^\top G \alpha} &=& \sqrt{\left \langle \langle \alpha, Y\tphi_X \rangle, \langle \alpha, Y\tphi_X \rangle \right \rangle_K}\\
&=&  \| \sum_i \alpha_i y_i \tphi_{x_i}\|_K \leq \sum_i \|\alpha_i y_i \tphi_{x_i}\|_K\\
& \leq& \sum_i |\alpha_i| \left\|y_i \frac{\phi_{x_i}}{\sqrt{K(x_i,x_i)}}\right\|_K = \sum_i |\alpha_i|,
\end{eqnarray*}
where we used $\langle \phi_{x_i},\phi_{x_i}\rangle_K = K(x_i,x_i)$.
\end{proof}

\vspace{0.0in}The third lemma gives a new perspective on the margin.
\begin{lemma}\label{marginlem}When $\rho_K>0$, $f$ maximizes the margin iff $\rho_K f$ optimizes $L(f)$. Hence, the margin is equivalently
$$\rho_K = \sup_{\|\alpha\|_G = 1} \inf_{p \in \Delta_n} \langle \alpha,p \rag \leq \|p\|_G \mbox{\ \ \ \ for all } p \in \Delta_n.$$
\end{lemma}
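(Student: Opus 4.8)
The plan is to read the three claims directly off the minimum-value computation culminating in Eq.~(\ref{minloss}), its $\alpha$-counterpart coming from Eqs.~(\ref{alpha})--(\ref{lossag}), and a short weak-duality estimate.

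\emph{Margin maximizers $\leftrightarrow$ optimizers of $L$.} I would revisit the chain of equalities leading to Eq.~(\ref{minloss}), this time tracking \emph{which} $t$ and $\bar f$ are optimal. For a fixed unit vector $\bar f$ with hard margin $m(\bar f):=\inf_{p\in\Delta_n}\langle Y\widetilde{\bar f}(X),p\rangle$, the scalar problem $\inf_{t>0}\big(-t\,m(\bar f)+\tfrac12 t^2\big)$ equals $-\tfrac12\big(\max\{m(\bar f),0\}\big)^2$ and is attained at $t=m(\bar f)$ precisely when $m(\bar f)>0$. Infimizing over $\bar f$ and using $\rho_K>0$ both recovers $\inf_f L(f)=-\tfrac12\rho_K^2$ and shows that $f^\star$ attains this infimum if and only if $f^\star=\rho_K\bar f^\star$ for some unit vector $\bar f^\star$ attaining the supremum in Eq.~(\ref{margin}); equivalently, $f$ maximizes the margin iff $\rho_K f$ optimizes $L$.

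\emph{Rewriting the margin via $G$.} From $Y\widetilde{f_\alpha}(X)=G\alpha$ and $\|f_\alpha\|_K^2=\alpha^\top G\alpha=\|\alpha\|_G^2$ (the identity proved inside Lemma~\ref{glt1}) one checks $L(f_\alpha)=L(\alpha)$, so by the Representer Theorem $\min_\alpha L(\alpha)=\min_f L(f)=-\tfrac12\rho_K^2$. Applying the identical one-dimensional reduction to $L(\alpha)=\big\{\sup_{p}\langle-\alpha,p\rag\big\}+\tfrac12\|\alpha\|_G^2$ --- vectors with $\|\alpha\|_G=0$ give $L(\alpha)=0$ and so are never optimal, while the rest can be scaled as $\alpha=\|\alpha\|_G\,\bar\alpha$, $\|\bar\alpha\|_G=1$ --- yields $\min_\alpha L(\alpha)=-\tfrac12\big(\max\{\tilde\rho,0\}\big)^2$ with $\tilde\rho:=\sup_{\|\alpha\|_G=1}\inf_{p\in\Delta_n}\langle\alpha,p\rag$. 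Comparing with $-\tfrac12\rho_K^2<0$ forces $\tilde\rho=\rho_K$; this also exhibits a maximizer for Eq.~(\ref{margin}) (the minimizer of $L$), settling the first two claims.

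\emph{The bound $\rho_K\le\|p\|_G$, and the main obstacle.} Fix $p_0\in\Delta_n$; bounding the infimum in Eq.~(\ref{margin}) by its value at $p_0$ and using the reproducing property $\langle Y\tf(X),p_0\rangle=\langle f,\sum_i (p_0)_i\,y_i\tphi_{x_i}\rangle_K$ gives $\rho_K\le\sup_{\|f\|_K=1}\langle f,\sum_i (p_0)_i\,y_i\tphi_{x_i}\rangle_K=\big\|\sum_i (p_0)_i\,y_i\tphi_{x_i}\big\|_K=\sqrt{p_0^\top G p_0}=\|p_0\|_G$, the middle equality because the supremum of a linear functional over the unit ball is the norm of its representer (equivalently: the minimax inequality $\sup_\alpha\inf_p\le\inf_p\sup_\alpha$ plus Cauchy--Schwarz $\langle\alpha,p\rag\le\|\alpha\|_G\|p\|_G$ for the PSD form $G$). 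The only genuine obstacle I anticipate is that $\|\cdot\|_G$ is merely a seminorm: this is why the scaling step must first discard $\ker G$ (on which $L\equiv 0$) and why ``$f$ maximizes the margin'' should be read with optimizers living in the span of the $y_i\tphi_{x_i}$; past that caveat, the proof is bookkeeping around Eqs.~(\ref{minloss}) and~(\ref{alpha}).
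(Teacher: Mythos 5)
Your proof is correct and follows essentially the same route as the paper: both read the equivalence of margin maximizers and $L$-minimizers off the one-dimensional reduction to Eq.~(\ref{minloss}), both invoke the Representer Theorem to pass to $\alpha$-space and obtain the $\sup_{\|\alpha\|_G=1}\inf_p$ characterization, and both close with Cauchy--Schwarz for $\rho_K\le\|p\|_G$ (your direct evaluation of the representer's norm is a slightly cleaner phrasing of the paper's minimax-then-Cauchy--Schwarz step, but it is the same inequality).
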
\vspace{-0.2in}
\begin{proof} 
Let $f_\rho$ be any function with $\|f_\rho\|_K=1$ that achieves the max-margin $\rho_K>0$. Then, it is easy to plug $\rho_K f_\rho$ into Eq. (\ref{loss}) and verify that $L(\rho_K f_\rho) = -\half \rho_K^2$ and hence $\rho_K f_\rho$ minimizes $L(f)$. 

Similarly, let $f_L$ be any function that minimizes $L(f)$, i.e. achieves the value $L(f_L) = -\half \rho_K^2$. Defining $t:= \|f_L\|_K$, and examining Eq.~(\ref{minloss}), we see that $L(f_L)$ cannot achieve the value $-\half \rho_K^2$ unless $t = \rho_K$ and $\sup_{p \in \Delta_n} \left \langle -Y\tf_L(X), p \right \rangle = -\rho_K^2$ which means that $f_L/\rho_K$ must achieve the max-margin.

Hence considering only $f = \sum_i \alpha_i y_i \tphi_{x_i}$ is acceptable for both. Plugging this into Eq.~(\ref{margin}) gives the equality and 
\begin{eqnarray*}
\rho_K &=& \inf_{p \in \Delta_n} \sup_{\|\alpha\|_G = 1} \langle \alpha,p \rag \leq \sup_{\|\alpha\|_G=1} \langle \alpha ,p \rag \\
&\leq& \|p\|_G \mbox{ \ \ by applying Cauchy-Schwartz }
\end{eqnarray*}
(can also be seen by going back to function space).
\end{proof}

\section{Smoothed Normalized Kernel Perceptron}
Define the distribution over the worst-classified points
\begin{eqnarray}
p(f) &:=& \arg\min_{p \in \Delta_n} \left \langle Y\tf(X), p \right \rangle  \nonumber\\
\mbox{or \ \ \ } p(\alpha) &:=& \arg\min_{p \in \Delta_n} \langle \alpha, p \rag. \label{pa}
\end{eqnarray}
\vspace{-0.2in}
\begin{algorithm}[]
   \caption{Normalized Kernel Perceptron (NKP)}
\begin{algorithmic}
   \STATE Set $\alpha_0 := 0$
   \FOR{$k=0,1,2,3,...$}
   \IF{$G\alpha_k > \zeros_n$} 
   \STATE Exit, with $\alpha_k$ as solution
   \ELSE
   \STATE $\theta_k := \frac{1}{k+1}$
   \STATE $\alpha_{k+1} := (1-\theta_k) \alpha_k + \theta_k p(\alpha_k)$
   \ENDIF
   \ENDFOR
\end{algorithmic}
\end{algorithm}\vspace{-0.2in}
\begin{eqnarray*}
\mbox{Implicitly \ \ \ }f_{k+1} &=& (1-\theta_k)f_k + \theta_k \langle Y \tphi_X, p({f_k}) \rangle \\
&=& f_k - \theta_k \left( f_k - \langle Y\tphi_X, p({f_k}) \rangle \right)\\
&=& f_k - \theta_k \partial L(f_k)
\end{eqnarray*}
and hence the Normalized Kernel Perceptron (NKP) is a \textit{subgradient algorithm} to minimize $L(f)$ from Eq. (\ref{loss}). 

\textbf{Remark.} Lemma \ref{marginlem} yields deep insights. Since NKP can get arbitrarily close to the minimizer of strongly convex $L(f)$, it also gets arbitrarily close to a margin maximizer. It is known that it finds a perfect classifier in $1/\rho_K^2$ iterations - we now additionally infer that it will continue to improve to find an approximate max-margin classifier. While both classical and normalized Perceptrons find perfect classifiers in the same time, the latter is guaranteed to improve.

\textbf{Remark.} $\alpha_{k+1}$ is always a probability distribution. Curiously, a guarantee that the solution will lie in $\Delta_n$ is \textit{not} made by the Representer Theorem in Eq. (\ref{alpha}) - any $\alpha \in \R^n$ could satisfy Lemma \ref{Ga}. However, since NKP is a subgradient method for minimizing Eq.~(\ref{loss}), we know that we will approach the optimum while only choosing $\alpha \in \Delta_n$.

Define the smooth minimizer analogous to Eq.~(\ref{pa}) as
\begin{eqnarray}\label{pmuf}
p_\mu(\alpha) &:=& \arg\min_{p \in \Delta_n} \Big\{ \langle \alpha, p \rag + \mu d(p) \Big\}\\
&=& \frac{e^{-G\alpha/\mu}}{\|e^{-G\alpha/\mu}\|_1},\nonumber\\
\mbox{where \ \  } d(p) &:=& \sum_i p_i \log p_i + \log n
\end{eqnarray}
is $1$-strongly convex with respect to the $\ell_1$-norm \citep{N05}.
\begin{algorithm}[]
   \caption{Smoothed Normalized Kernel Perceptron}
\begin{algorithmic}
  \STATE Set $\alpha_0 = \ones_n/n, \ \mu_0 := 2, \ p_0 := p_{\mu_0}({\alpha_0})$
   \FOR{$k=0,1,2,3,...$}
   \IF{$G\alpha_k > \zeros_n $}
   \STATE Halt: $\alpha_k$ is solution to Eq.~(\ref{alpha})
   \ELSE
   \STATE $\theta_k := \frac{2}{k+3}$
   \STATE $\alpha_{k+1} := (1-\theta_k)(\alpha_k + \theta_kp_k) + \theta_k^2p_{\mu_k}({\alpha_k}) $
   \STATE $\mu_{k+1} = (1-\theta_k) \mu_k$
   \STATE $p_{k+1} := (1-\theta_k) p_k + \theta_k p_{\mu_{k+1}}(\alpha_{k+1})$
   \ENDIF
   \ENDFOR
\end{algorithmic}
\end{algorithm}
Define a smoothened loss function as in Eq.~(\ref{lossag})
$$
L_{\mu} (\alpha) = \sup_{p \in \Delta_n} \bigg\{ - \langle \alpha, p \rag - \mu d(p) \bigg\} + \tfrac1{2} \|\alpha\|_G^2.
$$
Note that the maximizer above is precisely $p_\mu(\alpha)$.

\begin{lemma}[Lower Bound]\label{LB}
At any step $k$, we have $$L_{\mu_k} (\alpha_k) \geq L (\alpha_k) - \mu_k \log n.$$
\end{lemma}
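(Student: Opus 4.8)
The plan is to exploit the fact that $L_{\mu_k}$ and $L$ differ only through the entropic penalty $\mu_k d(p)$ sitting inside the supremum over the simplex, and that this penalty is uniformly bounded on $\Delta_n$. First I would record the elementary two-sided bound
\[
0 \;\le\; d(p) \;\le\; \log n \qquad \text{for all } p \in \Delta_n .
\]
The left inequality is just the statement that the Shannon entropy $-\sum_i p_i\log p_i$ never exceeds $\log n$ (it is maximized at the uniform distribution), and the right inequality follows from $\log p_i \le 0$, which gives $\sum_i p_i \log p_i \le 0$ and hence $d(p) = \sum_i p_i\log p_i + \log n \le \log n$. Only the upper bound is needed below.

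Next I would note that $\mu_k > 0$ at every iteration: $\mu_0 = 2 > 0$ and $\mu_{k+1} = (1-\theta_k)\mu_k$ with $\theta_k = \tfrac{2}{k+3} \in (0,1)$, so $\mu_k = 2\prod_{j=0}^{k-1}(1-\theta_j) > 0$. Consequently $\mu_k\, d(p) \le \mu_k \log n$ for every $p \in \Delta_n$. Now fix $\alpha = \alpha_k$ and $\mu = \mu_k$. For every $p \in \Delta_n$,
\[
-\langle \alpha, p \rag - \mu\, d(p) \;\ge\; -\langle \alpha, p \rag - \mu \log n ,
\]
and taking the supremum over $p \in \Delta_n$ on both sides (the additive constant $-\mu\log n$ pulls outside the supremum) yields
\[
\sup_{p\in\Delta_n}\big\{ -\langle \alpha, p \rag - \mu\, d(p) \big\} \;\ge\; \sup_{p\in\Delta_n}\big\{ -\langle \alpha, p \rag \big\} - \mu \log n .
\]
Adding $\half\|\alpha\|_G^2$ to both sides and invoking the definition of $L_\mu$ together with the definition of $L$ in Eq.~(\ref{lossag}) gives precisely $L_{\mu_k}(\alpha_k) \ge L(\alpha_k) - \mu_k \log n$.

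I expect no genuine obstacle here: the entire content is the bound $d(p) \le \log n$ combined with monotonicity of the supremum. The only points requiring a moment's care are verifying $\mu_k \ge 0$ (handled via the product formula above) and observing that the supremum of a function plus a constant splits as the supremum plus the constant, which is immediate.
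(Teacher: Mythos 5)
Your proof is correct and follows essentially the same route as the paper's: bound $d(p)$ uniformly by $\log n$ on the simplex, propagate this through the supremum, and add $\tfrac12\|\alpha\|_G^2$. The paper phrases the key step as the sup-subadditivity inequality $\sup_p\{f(p)-g(p)\}\geq \sup_p f(p) - \sup_p g(p)$, while you derive the same conclusion by substituting the pointwise bound before taking the supremum; these are interchangeable, and your explicit check that $\mu_k>0$ (needed for the direction of the inequality) is a harmless extra.
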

\begin{proof}
First note that $\sup_{p \in \Delta_n} d(p) = \log n$. Also,
\begin{eqnarray*}
&&\sup_{p \in \Delta_n} \big\{ - \langle \alpha, p \rag - \mu d(p) \big\}\\
&\geq& \sup_{p \in \Delta_n} \big\{ - \langle \alpha, p \rag  \big\} - \sup_{p \in \Delta_n} \big\{\mu d(p) \big\}.
\end{eqnarray*}
Combining these two facts gives us the result.
\end{proof}\vspace{0.1in}

\begin{lemma}[Upper Bound]\label{UB}
In any round $k$, SNKP satisfies $$L_{\mu_k} (\alpha_k) \leq -\half\|p_k\|_G^2.$$
\end{lemma}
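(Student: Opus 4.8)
The plan is to read Lemma~\ref{UB} as an \emph{excessive gap} inequality and prove it by induction on $k$. Since $-\half\|p\|_G^2 = \inf_\alpha\{-\langle\alpha,p\rag + \half\|\alpha\|_G^2\}$, attained at $\alpha=p$, Lemma~\ref{marginlem} identifies $\phi(p):=-\half\|p\|_G^2$ as the dual objective (its maximum over $\Delta_n$ being $\min_\alpha L(\alpha)=-\half\rho_K^2$), so the statement says the smoothed primal value $L_{\mu_k}(\alpha_k)$ never overtakes the current dual value $\phi(p_k)$. Write $L_\mu(\alpha)=g_\mu(\alpha)+\half\|\alpha\|_G^2$ with $g_\mu(\alpha):=\sup_{p\in\Delta_n}\{-\langle\alpha,p\rag-\mu d(p)\}$, attained at $p_\mu(\alpha)$. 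I will need only two elementary facts: (i) because $\|\cdot\|_G^2$ is a quadratic form, $\half\|\lambda a+(1-\lambda)b\|_G^2=\lambda\half\|a\|_G^2+(1-\lambda)\half\|b\|_G^2-\tfrac{\lambda(1-\lambda)}{2}\|a-b\|_G^2$ for $\lambda\in[0,1]$; and (ii) because $d$ is $1$-strongly convex with respect to $\|\cdot\|_1$, the $\mu$-strongly concave map $-\langle\alpha,\cdot\rag-\mu d(\cdot)$ satisfies $-\langle\alpha,p\rag-\mu d(p)\le g_\mu(\alpha)-\tfrac\mu2\|p-p_\mu(\alpha)\|_1^2$ for all $p\in\Delta_n$, which with $\|\cdot\|_1\ge\|\cdot\|_G$ (Lemma~\ref{glt1}) gives $-\langle\alpha,p\rag\le g_\mu(\alpha)+\mu d(p)-\tfrac\mu2\|p-p_\mu(\alpha)\|_G^2$.

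First I would do the base case: $\alpha_0=\onebyn$ minimizes $d$ over $\Delta_n$ with value $0$, so strong convexity gives $d(p_0)\ge\half\|p_0-\alpha_0\|_1^2\ge\half\|p_0-\alpha_0\|_G^2$, and since $p_0$ attains the supremum in $g_{\mu_0}(\alpha_0)$, fact (i) yields $L_{\mu_0}(\alpha_0)+\half\|p_0\|_G^2=\half\|\alpha_0-p_0\|_G^2-\mu_0 d(p_0)\le\tfrac{1-\mu_0}{2}\|\alpha_0-p_0\|_G^2\le0$ since $\mu_0=2\ge1$.

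For the inductive step, abbreviate $\theta:=\theta_k$, $\hat p:=p_{\mu_k}(\alpha_k)$, $q:=p_{\mu_{k+1}}(\alpha_{k+1})$, so $\alpha_{k+1}=(1-\theta)\alpha_k+\theta\tilde p$ with $\tilde p:=(1-\theta)p_k+\theta\hat p$, and $p_{k+1}=(1-\theta)p_k+\theta q$. I would expand $L_{\mu_{k+1}}(\alpha_{k+1})=-\langle\alpha_{k+1},q\rag-\mu_{k+1}d(q)+\half\|\alpha_{k+1}\|_G^2$, apply (i) to $\half\|\alpha_{k+1}\|_G^2$, split $-\langle\alpha_{k+1},q\rag$ into its $(1-\theta)$- and $\theta$-weighted parts, and on the $(1-\theta)$-part bound $-\langle\alpha_k,q\rag$ using fact (ii) for $\hat p$ --- crucially not the weaker $-\langle\alpha_k,q\rag\le g_{\mu_k}(\alpha_k)+\mu_k d(q)$. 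This imports the inductive hypothesis as $(1-\theta)L_{\mu_k}(\alpha_k)\le-(1-\theta)\half\|p_k\|_G^2$, cancels the $\mu_{k+1}d(q)$ terms (here $\mu_{k+1}=(1-\theta)\mu_k$ matters), and leaves the useful term $-\tfrac{(1-\theta)\mu_k}{2}\|q-\hat p\|_G^2$ in the bound. Applying (i) also to $\half\|p_{k+1}\|_G^2$ on the right of the target inequality and simplifying, the claim reduces to
$$\|\tilde p-q\|_G^2\;\le\;(1-\theta)\|\alpha_k-\tilde p\|_G^2+\tfrac{(1-\theta)\mu_k}{\theta}\|q-\hat p\|_G^2+(1-\theta)\|p_k-q\|_G^2.$$
Bounding the left side by convexity of $\|\cdot\|_G^2$ as $\|\tilde p-q\|_G^2\le(1-\theta)\|p_k-q\|_G^2+\theta\|\hat p-q\|_G^2$, the $(1-\theta)\|p_k-q\|_G^2$ terms cancel and it remains to show $\left(\theta-\tfrac{(1-\theta)\mu_k}{\theta}\right)\|\hat p-q\|_G^2\le(1-\theta)\|\alpha_k-\tilde p\|_G^2$; this holds because the schedule $\theta_k=\tfrac{2}{k+3}$ (which makes $\mu_k=\tfrac{4}{(k+1)(k+2)}$) ensures $\theta^2\le(1-\theta)\mu_k=\mu_{k+1}$, so the coefficient on the left is $\le0$ while the right side is $\ge0$.

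The one genuinely delicate point --- and the likeliest place for an attempted proof to break --- is invoking fact (ii) for $\hat p$ rather than the weaker envelope bound: the $-\tfrac{(1-\theta)\mu_k}{2}\|q-\hat p\|_G^2$ term it supplies is exactly what absorbs $\theta\|\hat p-q\|_G^2$ after the step-size is substituted, and without it the reduced inequality is false in general --- so this is where the strong convexity of the prox-function $d$ is spent. The remaining work is bookkeeping, modulo one check: $\|\cdot\|_G$ is merely a seminorm, but fact (i), convexity of $\|\cdot\|_G^2$, and $\|\cdot\|_1\ge\|\cdot\|_G$ all hold for PSD $G$, so nothing is affected. (The Lipschitz smoothness of $g_\mu$, which one might expect from accelerated smoothing, plays no role in \emph{this} lemma --- it enters the convergence-rate argument instead.)
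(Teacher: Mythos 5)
Your proof is correct and is essentially the paper's own inductive excessive-gap argument from the Appendix: it relies on the same ingredients (strong convexity of $d$ plus the first-order optimality of $p_\mu(\alpha)$ — which you fuse into your fact (ii) — the bound $\|\cdot\|_G \le \|\cdot\|_1$, and the step-size identity $\theta_k^2 \le (1-\theta_k)\mu_k = \mu_{k+1}$). The only difference is bookkeeping: you keep the exact quadratic cross term $-\tfrac{\theta(1-\theta)}{2}\|\alpha_k-\tilde p\|_G^2$ as slack where the paper applies convexity of $\|\cdot\|_G^2$ to $\|\alpha_{k+1}\|_G^2$ and later discards a $-\tfrac12\|p_k-\tilde p\|_G^2$ term instead.
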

\begin{proof}
We provide a concise, self-contained and unified proof by induction in the Appendix for Lemma \ref{UB} and Lemma \ref{rhoplus}, borrowing ideas from Nesterov's excessive gap technique \citep{N05} for smooth minimization of structured non-smooth functions.
\end{proof}\vspace{0.1in}

Finally, we combine the above lemmas to get the following theorem about the performance of SNKP.

\begin{theorem}\label{snkp}
The SNKP algorithm finds a perfect classifier $f \in \F_K$ when one exists in $O\left(\tfrac{\sqrt{\log n}}{\rho_K}\right)$ iterations.
\end{theorem}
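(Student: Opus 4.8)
The plan is to sandwich the non-smooth objective $L(\alpha_k)$ between the two bounds already in hand and show that the sandwich forces $L(\alpha_k) < 0$ after $O(\sqrt{\log n}/\rho_K)$ steps, at which point Lemma~\ref{Ga} guarantees $G\alpha_k > \zeros_n$, so the halting test fires and $f_{\alpha_k} = \langle \alpha_k, Y\tphi_X \rangle$ is a perfect classifier.

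First I would chain Lemma~\ref{LB} and Lemma~\ref{UB}: for every $k$,
\[
L(\alpha_k) - \mu_k \log n \;\le\; L_{\mu_k}(\alpha_k) \;\le\; -\half \|p_k\|_G^2,
\]
hence $L(\alpha_k) \le \mu_k \log n - \half \|p_k\|_G^2$. Next, note that $p_k \in \Delta_n$ for every $k$: it is initialized as $p_{\mu_0}(\alpha_0) \in \Delta_n$ and each update $p_{k+1} = (1-\theta_k) p_k + \theta_k p_{\mu_{k+1}}(\alpha_{k+1})$ is a convex combination (since $\theta_k = \tfrac{2}{k+3} \in [0,1]$) of points of the simplex. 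Lemma~\ref{marginlem} then gives $\rho_K \le \|p_k\|_G$, so
\[
L(\alpha_k) \;\le\; \mu_k \log n - \half \rho_K^2 .
\]

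The remaining ingredient is to track the smoothing parameter. From $\mu_0 = 2$ and $\mu_{k+1} = (1-\theta_k)\mu_k$ with $\theta_k = \tfrac{2}{k+3}$, the product telescopes:
\[
\mu_k \;=\; \mu_0 \prod_{i=0}^{k-1} \frac{i+1}{i+3} \;=\; 2\cdot \frac{1\cdot 2}{(k+1)(k+2)} \;=\; \frac{4}{(k+1)(k+2)} .
\]
Therefore $L(\alpha_k) \le \frac{4\log n}{(k+1)(k+2)} - \half \rho_K^2 < 0$ as soon as $(k+1)(k+2) > \frac{8\log n}{\rho_K^2}$, i.e. for all $k \ge c\,\frac{\sqrt{\log n}}{\rho_K}$ with an absolute constant $c$. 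By Lemma~\ref{Ga}, $L(\alpha_k) < 0$ implies $G\alpha_k > \zeros_n$, so at such a $k$ the algorithm exits with $\alpha_k$, and again by Lemma~\ref{Ga} the function $f_{\alpha_k} = \langle \alpha_k, Y\tphi_X \rangle$ is a perfect classifier, which gives the claimed $O(\sqrt{\log n}/\rho_K)$ iteration count.

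The genuine work behind this theorem is hidden in Lemma~\ref{UB} — the excessive-gap invariant $L_{\mu_k}(\alpha_k) \le -\half\|p_k\|_G^2$ — whose inductive proof via Nesterov's technique is the real obstacle and is deferred to the appendix. Granting the two lemmas, the only points needing care here are the telescoping evaluation of $\mu_k$ and the fact that $p_k$ stays in $\Delta_n$ so that Lemma~\ref{marginlem} applies; note also that only the easy implication $L(\alpha_k)<0 \Rightarrow G\alpha_k>\zeros_n$ of Lemma~\ref{Ga} is used, and the exit condition $G\alpha_k > \zeros_n$ may of course trigger even earlier.
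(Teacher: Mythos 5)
Your proof is correct and follows essentially the same route as the paper's: chaining Lemma~\ref{LB} and Lemma~\ref{UB} to sandwich $L(\alpha_k)$, invoking Lemma~\ref{marginlem} with $\rho_K \le \|p_k\|_G$, and computing $\mu_k = \tfrac{4}{(k+1)(k+2)}$ to force $L(\alpha_k) < 0$ within $O(\sqrt{\log n}/\rho_K)$ steps, then concluding via Lemma~\ref{Ga}. Your explicit verification that $p_k \in \Delta_n$ and the spelled-out telescoping of $\mu_k$ are useful elaborations of steps the paper leaves implicit, but they do not constitute a different argument.
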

\begin{proof}
Lemma \ref{LB} gives us for any round $k$,
$$L_{\mu_k} (\alpha_k) \geq L (\alpha_k) - \mu_k \log n.$$
From Lemmas \ref{marginlem}, \ref{UB} we get
\begin{equation*}
L_{\mu_k} (\alpha_k) \leq -\tfrac1{2} p_k^\top G p_k \leq -\tfrac1{2} \rho_K^2.
\end{equation*}
Combining the two equations, we get that 
$$
L(\alpha_k) \leq \mu_k \log n -\tfrac1{2} \rho_K^2.
$$
Noting that $\mu_k = \frac{4}{(k+1)(k+2)} < \frac{4}{(k+1)^2}$, we see that $L(\alpha_k) < 0$ (and hence we solve the problem by Lemma~\ref{Ga}) after at most $k = 2\sqrt{2 \log n}/\rho_K$ steps.
\end{proof}

\section{Infeasible Problems}

What happens when the points are not separable by any function $f \in \F_K$? We would like an algorithm that terminates with a solution when there is one, and terminates with a certificate of non-separability if there isn't one. The idea is based on theorems of the alternative like Farkas' Lemma, specifically a version of Gordan's theorem \citep{C83}:\vspace{0.1in}

\begin{lemma}[Gordan's Thm] Exactly one of the following two statements can be true
\begin{enumerate}
\item Either there exists a $w \in \R^d$ such that for all $i$, $$y_i(w^\top x_i) > 0,$$
\item Or, there exists a $p \in \Delta_n$ such that 
\begin{equation}\label{lindual}
\| XY p \|_2 = 0,
\end{equation}
or equivalently $\sum_i p_i y_i x_i = 0.$
\end{enumerate}
\end{lemma}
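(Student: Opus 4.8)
The plan is to establish the dichotomy in two halves: first that the two alternatives cannot hold simultaneously, and then that at least one of them must hold.

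\emph{Mutual exclusivity} is immediate. If some $w$ satisfies the first alternative and some $p \in \Delta_n$ satisfies the second, then
$$0 = w^\top\!\!\left(\sum_i p_i y_i x_i\right) = \sum_i p_i\, y_i(w^\top x_i) > 0,$$
since every $y_i(w^\top x_i) > 0$ and the weights $p_i$ are nonnegative and sum to $1$; this contradiction rules out both holding at once.

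\emph{At least one holds.} Here I would run the optimization argument that also foreshadows the RKHS version, by minimizing the norm of the convex combination over the simplex. Set $\theta^\star := \min_{p \in \Delta_n} \|XYp\|_2^2$; the objective is continuous and $\Delta_n$ is compact, so the minimum is attained at some $p^\star \in \Delta_n$, and we write $\bar x := XYp^\star = \sum_i p_i^\star y_i x_i$, so $\theta^\star = \|\bar x\|_2^2$. If $\theta^\star = 0$ then $\sum_i p_i^\star y_i x_i = \bar x = 0$ with $p^\star \in \Delta_n$, which is exactly the second alternative. If $\theta^\star > 0$, I claim $w := \bar x$ witnesses the first alternative: for each $j$, the curve $(1-t)p^\star + te_j$ stays in $\Delta_n$ for $t \in [0,1]$, so optimality of $p^\star$ forces
$$0 \leq \left.\frac{d}{dt}\right|_{t=0^+}\big\|\bar x + t(y_j x_j - \bar x)\big\|_2^2 = 2\big(\bar x^\top(y_j x_j) - \|\bar x\|_2^2\big),$$
whence $y_j(\bar x^\top x_j) \geq \|\bar x\|_2^2 = \theta^\star > 0$ for every $j$. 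Thus $\bar x$ makes an acute angle with all the $y_i x_i$, establishing the first alternative. Combining the two halves gives ``exactly one''.

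\emph{Main obstacle.} Nothing here is genuinely hard: the only two care-points are attainment of the minimum (Weierstrass on the compact $\Delta_n$) and extracting a \emph{strict} inequality from first-order optimality, which is precisely why the proof splits on whether $\theta^\star$ is zero or positive rather than arguing directly. An equivalent route is to take $C := \mathrm{conv}\{y_i x_i\}$ and apply strict separation of the origin from the compact convex set $C$ whenever $0 \notin C$; I prefer the minimization form because its first-order-optimality step carries over verbatim to $\F_K$ once $XYp$ is replaced by $\langle Y\tphi_X, p\rangle$ and $\|\cdot\|_2$ by $\|\cdot\|_K$, which is presumably how the RKHS version promised in Sec.~5 is obtained.
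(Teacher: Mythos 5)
Your proof is correct. One thing to note up front: the paper does not actually prove this finite-dimensional lemma; it simply states it with a citation to \cite{C83}. What the paper \emph{does} prove is the analogous RKHS version (its Theorem 2), and there the argument is genuinely different from yours. The paper lifts the problem to the product space $\F_K \times \R$, forms the compact convex set $Q = \mathrm{conv}\{(y_i\tphi_{x_i},1)\}$, observes that failure of alternative (2) means $(0,1)\notin Q$, and then invokes the Hahn--Banach separation theorem to produce a separating functional $(f,t)$ whose sign conditions force $t<0$ and $\langle f,y_i\tphi_{x_i}\rangle_K \geq -t > 0$, which is alternative (1). The extra $\R$-coordinate is a homogenization trick that encodes the simplex constraint $\sum_i p_i = 1$ inside the convex hull, so that the separation automatically delivers a uniform lower bound $-t$ on the margin. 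Your route instead minimizes $\|XYp\|_2^2$ over $\Delta_n$ and reads off alternative (1) from the first-order optimality (equivalently, projection) condition at the minimizer; this is constructive and gives the explicit witness $w=\bar{x}$, whereas the paper's separation argument is nonconstructive but transparently infinite-dimension-ready since Hahn--Banach applies verbatim in a Hilbert space. Both proofs are standard and both lift to $\F_K$: your observation that replacing $XYp$ by $\langle Y\tphi_X,p\rangle$ and $\|\cdot\|_2$ by $\|\cdot\|_K$ carries the minimization argument over is correct, since $\mathrm{conv}(Y\tphi_X)$ is still a compact convex set in a Hilbert space and the projection of the origin onto it is well-defined and characterized by the same variational inequality. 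So the content is equivalent, but the decompositions differ: you split on whether $\min_p\|XYp\|$ is zero, while the paper splits on whether $(0,1)$ lies in the lifted hull.
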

As mentioned in the introduction, the primal problem can be interpreted as finding a vector in the interior of the dual cone of $cone\{y_ix_i\}$, which is infeasible the dual cone is flat i.e. if $cone\{y_ix_i\}$ is not pointed, which happens when the origin is in the convex combination of $y_ix_i$s.

We will generalize the following algorithm for linear feasibility problems, that can be dated back to Von-Neumann, who mentioned it in a private communication with Dantzig, who later studied it himself \citep{D92}.


\begin{algorithm}[]
   \caption{Normalized Von-Neumann (NVN)}
   \label{alg:vn}
\begin{algorithmic}
   \STATE Initialize $p_0 = \ones_n/n, w_0 = XYp_0$
   \FOR{$k=0,1,2,3,...$}
   \IF{$\|XYp_k\|_2 \leq \epsilon$} 
   \STATE Exit and return $p_k$ as an $\epsilon$-solution to (\ref{lindual})
   \ELSE
   \STATE $j:= \arg\min_i y_i x_i^\top w_k$
   \STATE $\theta_k := \arg\min_{\lambda \in [0,1]} \|(1-\lambda)w_k + \lambda y_jx_j\|_2$
   \STATE $p_{k+1} := (1-\theta_k)p_k + \theta_k e_j$
   \STATE $w_{k+1} :=  XY p_{k+1} = (1-\theta_k)w_k + \theta_k y_j x_j$
   \ENDIF
   \ENDFOR
\end{algorithmic}
\end{algorithm}
This algorithm comes with a guarantee: \textit{If the problem (\ref{lf}) is infeasible, then the above algorithm will terminate with an $\epsilon$-approximate solution to (\ref{lindual}) in $1/\epsilon^2$ iterations.}

\cite{EF00} proved an incomparable bound - Normalized Von-Neumann (NVN) can compute an $\epsilon$-solution to (\ref{lindual}) in $O\left( \frac{1}{\rho_2^2}\log\left(\tfrac1{\epsilon}\right) \right)$ and can also find a solution to the primal (using $w_k$) in $O\left( \tfrac{1}{\rho_2^2} \right)$ when it is feasible.

We derive a smoothed variant of NVN in the next section, after we prove some crucial lemmas in RKHSs.

\subsection{A Separation Theorem for RKHSs}

While finite dimensional Euclidean spaces come with strong separation guarantees that come under various names like the separating hyperplane theorem, Gordan's theorem, Farkas' lemma, etc, the story isn't always the same for infinite dimensional function spaces which can often be tricky to deal with. We will prove an appropriate version of such a theorem that will be useful in our setting.

What follows is an interesting version of the Hahn-Banach separation theorem, which looks a lot like Gordan's theorem in finite dimensional spaces. The conditions to note here are that either $G\alpha > 0$ or $\|p\|_G = 0$.\vspace{0.1in}

\begin{theorem}
Exactly one of the following has a solution:
\begin{enumerate}\vspace{-0.1in}
\item Either $\exists f \in \F_K$ such that for all $i$, $$\frac{y_i f(x_i)}{\sqrt{K(x_i,x_i)}} = \langle f, y_i \tphi_{x_i} \rangle_K > 0 \mbox{\ \ \ i.e. \ \ } G \alpha > 0,$$
\item Or $\exists p \in \Delta_n$ such that 
\begin{equation}\label{dual}\sum_i p_i y_i \tphi_{x_i} = 0 \in \F_K \mbox{\ \ \ i.e. \ \ } \|p\|_G=0. \end{equation} 
\end{enumerate}
\end{theorem}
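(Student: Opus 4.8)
The plan is to follow the classical finite-dimensional proof of Gordan's theorem almost verbatim, with the one step that genuinely uses the geometry of the ambient space — separating a point from a convex set — carried out via the Hilbert projection theorem rather than the general Hahn--Banach theorem. Throughout I will use, exactly as in the proofs of Lemmas \ref{Ga} and \ref{glt1}, the translations $\|p\|_G=0 \iff \sum_i p_i y_i\tphi_{x_i}=0$ in $\F_K$ and $(G\alpha)_j=\langle y_j\tphi_{x_j},\,\sum_i\alpha_i y_i\tphi_{x_i}\rangle_K$, so that the two alternatives are literally the function-space statements displayed in the theorem.

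First I would dispose of the easy half: the two alternatives cannot both hold. If some $f\in\F_K$ satisfied $\langle f, y_i\tphi_{x_i}\rangle_K>0$ for every $i$ and some $p\in\Delta_n$ satisfied $\sum_i p_i y_i\tphi_{x_i}=0$, then pairing the two vectors would give
\[
0=\Big\langle f,\ \textstyle\sum_i p_i y_i\tphi_{x_i}\Big\rangle_K=\sum_i p_i\,\langle f, y_i\tphi_{x_i}\rangle_K>0,
\]
since the $p_i$ are nonnegative, sum to $1$ (so at least one is strictly positive), and multiply strictly positive numbers — a contradiction.

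For the other half I would show that if alternative 2 fails then alternative 1 holds. Let $C:=\{\sum_i p_i y_i\tphi_{x_i}: p\in\Delta_n\}=\mathrm{conv}\{y_1\tphi_{x_1},\dots,y_n\tphi_{x_n}\}\subseteq\F_K$, i.e.\ the image of the compact simplex $\Delta_n$ under the continuous (linear-in-$p$) map $p\mapsto\sum_i p_i y_i\tphi_{x_i}$. Then $C$ is convex and compact, hence closed, and it sits inside the finite-dimensional subspace $\mathrm{span}\{y_i\tphi_{x_i}\}$; by the translation above, the failure of alternative 2 says exactly that $0\notin C$. Now invoke the Hilbert projection theorem: there is a unique $g\in C$ of minimal norm, with $\|g\|_K>0$ because $0\notin C$, and the projection's variational inequality gives $\langle g, c-g\rangle_K\ge 0$, hence $\langle g,c\rangle_K\ge\|g\|_K^2>0$, for all $c\in C$. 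Taking $c=y_i\tphi_{x_i}$ yields $\langle g, y_i\tphi_{x_i}\rangle_K>0$ for every $i$. Finally $g\in C$, so $g=\sum_i\alpha_i y_i\tphi_{x_i}$ for some $\alpha\in\R^n$, whence $(G\alpha)_j=\langle y_j\tphi_{x_j}, g\rangle_K>0$ for all $j$, i.e.\ $G\alpha>0$: alternative 1 holds, with separator $f=g$.

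I do not anticipate a serious obstacle. The only place where $\F_K$ being infinite-dimensional could matter is the separation of $0$ from $C$, and this is handled cleanly precisely because $C$ is the continuous image of the compact simplex and is therefore compact (so in particular closed), which is all the projection theorem needs — no appeal to Hahn--Banach in full generality is required. Everything else reduces to the elementary pairing identity already used in Lemma \ref{Ga}.
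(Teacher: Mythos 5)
Your proof is correct, and it takes a genuinely different route from the paper's. The paper lifts to $\F_K\times\R$: it forms $Q=\mathrm{conv}\{(y_i\tphi_{x_i},1)\}$, observes that failure of alternative~2 means $(0,1)\notin Q$, and applies a strict separating-hyperplane theorem between the closed convex set $Q$ and the point $(0,1)$; the extra coordinate then forces $t<0$, turning the nonstrict inequality $\langle f,y_i\tphi_{x_i}\rangle_K+t\ge 0$ into the strict one $\langle f,y_i\tphi_{x_i}\rangle_K>0$. You instead stay in $\F_K$, project $0$ onto $C=\mathrm{conv}\{y_i\tphi_{x_i}\}$ via the Hilbert projection theorem, and read strict separation off the variational inequality $\langle g,c\rangle_K\ge\|g\|_K^2>0$. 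This is the classical minimum-norm-element proof of Gordan's theorem; it is more constructive (it exhibits the separator explicitly as the closest point of the hull to the origin), it sidesteps the homogenization trick, and it buys the extra structural observation for free that the separator can itself be taken of the form $\sum_i\alpha_i y_i\tphi_{x_i}$ with $\alpha\in\Delta_n$ — something the paper's route only recovers indirectly via the Representer Theorem. The trade-off is minor: the paper's lifting argument uses Hahn--Banach-style separation and hence transfers verbatim to Banach settings, while the projection argument is specific to Hilbert spaces; but since $\F_K$ is a Hilbert space this costs nothing here, and indeed both proofs already rely on the compactness of the finite-dimensional hull in exactly the same way.
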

\begin{proof}
Consider the following set
\begin{eqnarray*}
Q &=& \left\{ (f,t) = \left(\sum_i p_i y_i \tphi_{x_i}, \sum_i p_i \right) : p \in \Delta_n \right\} \\
&=& conv \left[ (y_1 \tphi_{x_1}, 1),...,(y_n \tphi_{x_n},1) \right] \\
&\subseteq& \F_K \times \R.
\end{eqnarray*}

If (2) does not hold, then it implies that $(0,1) \notin Q$. Since $Q$ is closed and convex, we can find a separating hyperplane between $Q$ and $(0,1)$, or in other words there exists $(f,t) \in \F_K \times \R$ such that
\begin{eqnarray*}
\Big \langle (f,t) , (g,s) \Big \rangle &\geq& 0 \ \ \forall (g,s) \in Q \\
\mbox{ and } \Big \langle (f,t), (0,1) \Big \rangle &<& 0. 
\end{eqnarray*}
The second condition immediately yields $t<0$. The first condition, when applied to $(g,s)=(y_i\tphi_{x_i},1) \in Q$ yields
\begin{eqnarray*}
\langle f, y_i \tphi_{x_i} \rangle_K + t &\geq& 0 \\
\Leftrightarrow \ \ \ \ \frac{y_i f(x_i)}{\sqrt{K(x_i,x_i)}} &>& 0
\end{eqnarray*}
since $t<0$, which shows that (1) holds.

It is also immediate that if (2) holds, then (1) cannot. 
\end{proof}

Note that $G$ is positive semi-definite - infeasibility requires both that it is not positive definite, and also that the witness to $p^\top Gp = 0$ must be a probability vector. Similarly, while it suffices that $G\alpha > 0$ for some $\alpha \in \R^n$, but coincidentally in our case $\alpha$ will also lie in the probability simplex.

\subsection{The infeasible margin $\rho_K$}
Note that constraining $\|f\|_K=1$ (or $\|\alpha\|_G= 1$) in Eq.~(\ref{margin}) and Lemma \ref{marginlem} allows $\rho_K$ to be negative in the infeasible case. If it was $\leq$, then $\rho_K$ would have been non-negative because $f=0$ (ie $\alpha=0$) is always allowed. 

So what is $\rho_K$ when the problem is infeasible? Let
$$
\mathrm{conv}(Y\tphi_X) := \Big\{ \sum_i p_i y_i \tphi_{x_i} | p \in \Delta_n \Big\} \subset \F_K\vspace{-0.1in}
$$
be the convex hull of the $y_i\tphi_{x_i}$s.\vspace{0.1in}

\begin{theorem}\label{margininf}
When the primal is infeasible, the margin\footnote{We thank a reviewer for pointing out that by this definition, $\rho_K$ might always be $0$ for infinite dimensional RKHSs because there are always directions perpendicular to the finite-dimensional hull - we conjecture the definition can be altered to restrict attention to the relative interior of the hull, making it non-zero.} is
\begin{eqnarray*}
|\rho_K| = \delta_{\max}:=\sup \Big \{ \delta \ \big| \ \|f\|_K \leq \delta \Rightarrow f \in \mathrm{conv}(Y\tphi_X) \Big \}
\end{eqnarray*}
\end{theorem}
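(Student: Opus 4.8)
The plan is to rephrase $\rho_K$ as the negative of a support-function minimization, and then to recognize that quantity as the radius of the largest ball that fits inside $\mathrm{conv}(Y\tphi_X)$. First, exactly as in the proof of Lemma~\ref{marginlem}, I would use the identity $\langle Y\tf(X),p\rangle = \langle f,\sum_i p_i y_i\tphi_{x_i}\rangle_K$ to write $\rho_K = \sup_{\|f\|_K=1}\inf_{g\in C}\langle f,g\rangle_K$, where $C:=\mathrm{conv}(Y\tphi_X)$. Being the convex hull of finitely many vectors, $C$ is compact, hence closed and convex, so its support function $h_C(u):=\sup_{g\in C}\langle u,g\rangle_K$ is well defined. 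Since $\inf_{g\in C}\langle f,g\rangle_K=-h_C(-f)$ holds by definition, substituting $u=-f$ gives $\rho_K=-\inf_{\|u\|_K=1}h_C(u)$, with no minimax theorem required.

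Next I would identify $\inf_{\|u\|_K=1}h_C(u)$ with $\delta_{\max}$. Writing $B:=\{f\in\F_K:\|f\|_K\le1\}$, the defining condition of $\delta_{\max}$ is precisely $\delta B\subseteq C$. The equivalence $\delta B\subseteq C \Leftrightarrow \delta\le\inf_{\|u\|_K=1}h_C(u)$ is the standard support-function criterion for set inclusion: the forward direction follows from $h_C\ge h_{\delta B}$ together with $h_{\delta B}(u)=\delta\|u\|_K$; the converse is a one-line contradiction using the Hilbert-space projection theorem (legitimate because $C$ is closed and convex) to strictly separate any $x\in\delta B\setminus C$ from $C$ by a unit vector $u$ with $\langle u,x\rangle_K>h_C(u)\ge\delta\ge\|x\|_K\ge\langle u,x\rangle_K$. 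Hence $\delta_{\max}=\inf_{\|u\|_K=1}h_C(u)=-\rho_K$.

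Finally, to turn $-\rho_K$ into $|\rho_K|$ I would invoke the RKHS version of Gordan's theorem proved just above: infeasibility of the primal yields $p\in\Delta_n$ with $\sum_i p_i y_i\tphi_{x_i}=0$, i.e. $0\in C$, so $h_C(u)\ge\langle u,0\rangle_K=0$ for every $u$, giving $\rho_K\le0$ and therefore $|\rho_K|=-\rho_K=\delta_{\max}$.

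The main obstacle is exactly the point raised in the footnote: when $\F_K$ is infinite-dimensional, $C$ lives in the finite-dimensional subspace $V:=\mathrm{span}(Y\tphi_X)$, so any unit $u\perp V$ has $h_C(u)=0$, which collapses both $\delta_{\max}$ and $\rho_K$ to $0$. To get a meaningful statement, the ``ball'' and the infimum over $u$ must be taken inside $V$ — equivalently, $\delta_{\max}$ should measure the relative-interior radius of $C$ within $V$. With that reading the three steps go through verbatim, the projection/separation argument now being carried out in the finite-dimensional space $V$, and in the genuinely finite-dimensional case nothing changes. The only remaining loose end is attainment of the sup and the inf, so that $|\rho_K|$ is literally a number and not merely a supremum, which is routine from compactness of $C$ and of the unit sphere of $V$.
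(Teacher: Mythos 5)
Your proof is correct, and it packages the argument differently from the paper. You recast $\rho_K = -\inf_{\|u\|_K=1} h_C(u)$ via the support function of $C=\mathrm{conv}(Y\tphi_X)$ and then invoke the standard support-function criterion $\delta B\subseteq C \Leftrightarrow \delta \leq \inf_{\|u\|_K=1} h_C(u)$, establishing $\delta_{\max} = -\rho_K$ in one stroke; the sign is then handled cleanly by pulling $0\in C$ from the RKHS Gordan theorem. The paper instead proves the two inequalities separately and more concretely: for ``$\geq$'' it takes an arbitrary unit $f'$, rescales to $-\delta f'\in C$, and reads off $\rho_K\leq-\delta$ directly; for ``$\leq$'' it proves the contrapositive $f\notin C\Rightarrow\|f\|_K>|\rho_K|$ via a separating $g$ with $\langle g,f\rangle_K<0\leq\inf_p\langle g,\langle Y\tphi_X,p\rangle\rangle_K$ and Cauchy--Schwarz, implicitly assuming $\rho_K<0$ from the infeasibility hypothesis. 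Both proofs rest on the same Hilbert-space projection/separation fact and the closedness of $C$; your version is somewhat more systematic and makes the sign reasoning explicit, at the cost of importing the support-function vocabulary, whereas the paper's is more self-contained and hands-on. Your remark about the infinite-dimensional collapse to $0$ and the fix of working inside $V=\mathrm{span}(Y\tphi_X)$ correctly echoes the footnote's caveat and is a valid strengthening of the statement.
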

\begin{proof}
\textbf{(1) For inequality $\geq$.} Choose any $\delta$ such that $f \in \mathrm{conv}(Y\tphi_X)$ for any $\|f\|_K \leq \delta$. 
Given an arbitrary $f' \in \F_K$ with $\|f'\|_K=1$, put $\tilde{f}:= -\delta f'$.

By our assumption on $\delta$, we have $\tilde{f} \in \mathrm{conv}(Y\tphi_X)$ implying there exists a $\tilde{p} \in \Delta_n$ such that  $\tilde{f} = \langle Y \tphi_X,\tilde{p}\rangle$ . Also
\begin{eqnarray*}
 \Big \langle f', \langle Y\tphi_X,\tilde{p} \rangle  \Big \rangle_K &=& \langle f', \tilde{f} \rangle_K\\
&=& -\delta \|f'\|_K^2 = -\delta.
\end{eqnarray*} 
Since this holds for a particular $\tilde{p}$, we can infer
$$\vspace{-0.05in}\inf_{p \in \Delta_n} \ \Big \langle f', \langle Y\tphi_X,\tilde{p} \rangle  \Big \rangle_K  \leq -\delta.$$
Since this holds for any $f'$ with $\|f'\|_G=1$, we have 
$$
\sup_{\|f\|_K = 1}\inf_{p \in \Delta_n}  \Big \langle f', \langle Y\tphi_X,\tilde{p} \rangle  \Big \rangle_K  \leq -\delta \mbox{ \ i.e. \ } |\rho_K| \geq \delta.
$$
\textbf{(2) For inequality $\leq$.} It suffices to show $\|f\|_K \leq |\rho_K| \Rightarrow f \in \mathrm{conv}(Y\tphi_X)$. We will prove the contrapositive $ f \notin \mathrm{conv}(Y\tphi_X) \Rightarrow \|f\|_K > |\rho_K|$.

Since $\Delta_n$ is compact and convex,  $\mathrm{conv}(Y\tphi_X)\subset~ \F_K$ is closed and convex. Therefore if $f \notin \mathrm{conv}(Y\tphi_X)$, then there exists $g \in \F_K$ with $\|g\|_K=1$ that separates $f$ and $\mathrm{conv}(Y\tphi_X)$, i.e. for all $p \in \Delta_n$,
\begin{eqnarray*}
\langle g,f \rangle_K &<& 0 \mbox{ and } \langle g, \langle Y\tphi_X, p \rangle \rangle_K \geq 0\\
\mbox{i.e. } \langle g,f \rangle_K &<& \inf_{p \in \Delta_n} \langle g, \langle Y\tphi_X, p \rangle \rangle_K\\
&\leq& \sup_{\|f\|_K = 1}\inf_{p \in \Delta_n} \langle f, \langle Y\tphi_X, p \rangle \rangle_K =  \rho_K.
\end{eqnarray*}\vspace{-0.1in}
\begin{eqnarray*}
\mbox{Since $\rho_K < 0$ \ \ \  }|\rho_K| &<& |\langle f, g \rangle_K |\\ 
&\leq& \|f\|_K\|g\|_K = \|f\|_K. \vspace{-0.2in}
\end{eqnarray*}
\end{proof}


\section{Kernelized Primal-Dual Algorithms}


The preceding theorems allow us to write a variant of the Normalized VonNeumann algorithm from the previous section that is smoothed and works for RKHSs. Define
$$W := \Big \{ p \in \Delta_n \Big | \sum_i p_iy_i\tphi_{x_i}=0 \Big \} = \Big \{ p \in \Delta_n \Big |  \|p\|_G=0 \Big\}\vspace{-0.1in}$$
as the set of witnesses to the infeasibility of the primal. The following lemma bounds the distance of any point in the simplex from the witness set by its $\|.\|_G$ norm.\vspace{0.1in}

\begin{lemma}\label{witness}For all $q \in \Delta_n$, the distance to the witness set
\begin{equation*}
\mathrm{dist}(q,W) := \min_{w \in W} \|q-w\|_2 \leq \min\left\{\sqrt 2, \frac{\sqrt2 \|q\|_G}{|\rho_K|}\right\}.
\end{equation*}
As a consequence, $\|p\|_G = 0$ iff $p \in W$.
\end{lemma}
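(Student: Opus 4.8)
The plan is to exhibit, for each $q\in\Delta_n$, an explicit witness $w\in W$ near $q$, exploiting the fact — established in the proof of Theorem~\ref{margininf} — that the $\F_K$-ball $\{f:\|f\|_K\le|\rho_K|\}$ is contained in $\mathrm{conv}(Y\tphi_X)$. Throughout I work in the infeasible case (otherwise $W=\emptyset$ and the statement is vacuous), where the RKHS Gordan theorem guarantees $W\neq\emptyset$.

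First I would clear the trivial cases. If $\|q\|_G=0$ then $q\in W$ by the very definition of $W$, so $\mathrm{dist}(q,W)=0$ and there is nothing to prove; hence assume $\|q\|_G>0$. If moreover $|\rho_K|=0$, the right-hand side is just $\sqrt2$, which follows from the elementary fact that any two points of $\Delta_n$ are within $\ell_2$-distance $\sqrt2$ — indeed $\|q-r\|_2^2=\|q\|_2^2-2q^\top r+\|r\|_2^2\le\|q\|_2^2+\|r\|_2^2\le2$, using $q^\top r\ge0$ and $\|q\|_2\le\|q\|_1=1$ — so any fixed $w_0\in W$ works. From now on assume $\|q\|_G>0$ and $|\rho_K|>0$.

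The heart of the argument is a convex-combination construction. Put $g:=\langle Y\tphi_X,q\rangle=\sum_i q_iy_i\tphi_{x_i}$, so that $\|g\|_K=\|q\|_G>0$ (this identity is the computation in Lemma~\ref{glt1}). The point $-\tfrac{|\rho_K|}{\|g\|_K}g\in\F_K$ has norm exactly $|\rho_K|$, hence lies in $\mathrm{conv}(Y\tphi_X)$ by Theorem~\ref{margininf}; pick $r\in\Delta_n$ with $\langle Y\tphi_X,r\rangle=-\tfrac{|\rho_K|}{\|g\|_K}g$. Set $\lambda:=\tfrac{|\rho_K|}{\|q\|_G+|\rho_K|}\in(0,1)$ and $w:=\lambda q+(1-\lambda)r\in\Delta_n$. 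By linearity of $p\mapsto\langle Y\tphi_X,p\rangle$ and the choice of $\lambda$, one gets $\langle Y\tphi_X,w\rangle=\big(\lambda-(1-\lambda)\tfrac{|\rho_K|}{\|q\|_G}\big)g=0$, so $\|w\|_G=0$, i.e.\ $w\in W$. Then $\mathrm{dist}(q,W)\le\|q-w\|_2=(1-\lambda)\|q-r\|_2\le(1-\lambda)\sqrt2=\tfrac{\sqrt2\,\|q\|_G}{\|q\|_G+|\rho_K|}$, which is at most $\sqrt2$ (as $1-\lambda\le1$) and at most $\tfrac{\sqrt2\,\|q\|_G}{|\rho_K|}$, giving the claimed bound. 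The stated consequence then follows: $\|p\|_G=0$ forces $\mathrm{dist}(p,W)=0$ and $W$ is closed, hence $p\in W$, while the reverse implication is immediate from the definition of $W$.

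The only step carrying real content is recognizing that Theorem~\ref{margininf} hands us the inscribed ball of radius $|\rho_K|$, which is exactly what lets us choose the "antipodal" point $r$ cancelling $g$; everything else is the routine convex-combination estimate above together with the $\sqrt2$ bound on the diameter of $\Delta_n$. The one point to watch is the degenerate regime $|\rho_K|=0$ flagged in the paper's own footnote, where the second entry of the $\min$ must be read as $+\infty$ and only the $\sqrt2$ bound is being asserted.
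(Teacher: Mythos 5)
Your proof is correct and follows essentially the same route as the paper's: both exploit the inscribed ball of radius $|\rho_K|$ from Theorem~\ref{margininf} to find an "antipodal" point $r\in\Delta_n$ with $\langle Y\tphi_X,r\rangle$ proportional to $-\langle Y\tphi_X,q\rangle$, form the exact same convex combination that cancels the image to zero, and bound the resulting displacement by the diameter $\sqrt2$ of $\Delta_n$ times a $\tfrac{\|q\|_G}{\|q\|_G+|\rho_K|}$ factor. The only cosmetic difference is that your $\lambda$ is the paper's $1-\lambda$ (with the roles of $q$ and $r$ swapped accordingly), and you spell out the edge cases $\|q\|_G=0$ and $|\rho_K|=0$ separately where the paper just appends the $\sqrt2$ term to the $\min$.
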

\begin{proof}
This is trivial for $p \in W$. For arbitrary $p \in \Delta_n \backslash W$, let $\tilde{p} := -\tfrac{|\rho_K| p}{\|p\|_G}$ so that $\|\langle Y\tphi_X,\tilde{p}\rangle\|_K = \|\tilde{p}\|_G \leq |\rho_K|$. 

Hence by Theorem \ref{margininf}, there exists $\alpha \in \Delta_n$ such that 
$$\langle Y\tphi_X,\alpha\rangle = \langle Y\tphi_X,\tilde{p} \rangle.$$ 

Let $\beta = \lambda \alpha + (1-\lambda) p$ where $\lambda = \tfrac{\|p\|_G}{\|p\|_G + |\rho_K|}$. Then 
\begin{eqnarray*}
\langle Y \tphi_X,\beta \rangle &=& \frac1{\|p\|_G + |\rho_K|} \Big \langle Y \tphi_X, \|p\|_G \alpha + |\rho_K|p \Big \rangle \\
&=& \frac1{\|p\|_G + |\rho_K|} \langle Y \tphi_X, \|p\|_G \tilde{p} + |\rho_K|p \rangle \\
&=& 0,
\end{eqnarray*}
so $\beta \in W$ (by definition of what it means to be in $W$) and
$$
\|p - \beta\|_2 = \lambda\|p-\alpha\|_2 \leq \lambda \sqrt2 \leq   \min\left\{\sqrt 2, \frac{\sqrt2 \|q\|_G}{|\rho_K|}\right\}.
$$
We take $\min$ with $\sqrt 2$ because $\rho_K$ might be $0$.
\end{proof}
Hence for the primal or dual problem, points with small G-norm are revealing - either Lemma \ref{marginlem} shows that the margin $\rho_K \leq \|p\|_G$ will be small, or if it is infeasible then the above lemma shows that it is close to the witness set.

We need a small alteration to the smoothing entropy prox-function that we used earlier. 
We will now use
$$
d_q(p) = \half\|p - q\|_2^2
$$
for some given $q\in \Delta_n$, which is strongly convex with respect to the $\ell_2$ norm. This allows us to define
\begin{eqnarray*}
p^q_\mu(\alpha) &=& \arg \min_{p \in \Delta_n} \left \langle G\alpha, p \right \rangle + \frac{\mu}{2} \|p-q\|_2^2,\\
L^q_{\mu} (\alpha) &=& \sup_{p \in \Delta_n} \bigg\{ - \langle \alpha, p \rangle_G - \mu d_q(p) \bigg\} + \tfrac1{2} \|\alpha\|_G^2,
\end{eqnarray*}
which can easily be found by sorting the entries of $q - \frac{G\alpha}{\mu}$.

\begin{algorithm}[]
   \caption{Smoothed Normalized Kernel Perceptron-VonNeumann ($SNKPVN(q,\delta)$)}
\begin{algorithmic}
   \STATE Input $q \in \Delta_n$, accuracy $\delta > 0$
   \STATE Set $\alpha_0 = q, \ \mu_0 := 2n, \ p_0 := p_{\mu_0}^q(\alpha_0)$
   \FOR{$k=0,1,2,3,...$}
   \IF{$G\alpha_k > \zeros_n $}
   \STATE Halt: $\alpha_k$ is solution to Eq.~(\ref{alpha})
   \ELSIF {$\|p_k\|_G < \delta$}
   \STATE Return $p_k$
   \ELSE
   \STATE $\theta_k := \frac{2}{k+3}$
   \STATE $\alpha_{k+1} := (1-\theta_k)(\alpha_k + \theta_kp_k) + \theta_k^2 \ p_{\mu_k}^q(\alpha_k) $
   \STATE $\mu_{k+1} = (1-\theta_k) \mu_k$
   \STATE $p_{k+1} := (1-\theta_k) p_k + \theta_k \ p_{\mu_{k+1}}^q(\alpha_{k+1})$
   \ENDIF
   \ENDFOR
\end{algorithmic}
\end{algorithm}


When the primal is feasible, SNKPVN is similar to SNKP.
\begin{lemma}[When $\rho_K>0$ and $\delta < \rho_K$]\label{rhoplus}
 For any $q~ \in \Delta_n$,
$$-\half \|p_k\|_G^2 ~\geq~ L^q_{\mu_k}(\alpha_k) ~\geq~ L(\alpha_k)-~\mu_k.$$ Hence SNKPVN finds a separator $f$ in $O\left(\tfrac{\sqrt{n}}{\rho_K} \right)$ iterations.
\end{lemma}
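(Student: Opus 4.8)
The plan is to replay the proof of Theorem~\ref{snkp} with the quadratic prox-function $d_q(p)=\tfrac12\|p-q\|_2^2$ substituted for the entropy $d$, and then read off the iteration bound. The right-hand inequality $L^q_{\mu_k}(\alpha_k)\ge L(\alpha_k)-\mu_k$ is obtained exactly as in Lemma~\ref{LB}: bound $\sup_{p\in\Delta_n}\{-\langle\alpha_k,p\rangle_G-\mu_k d_q(p)\}\ge\sup_{p\in\Delta_n}\{-\langle\alpha_k,p\rangle_G\}-\mu_k\sup_{p\in\Delta_n}d_q(p)$, add $\tfrac12\|\alpha_k\|_G^2$ to both sides, and use that the $\ell_2$-diameter of $\Delta_n$ equals $\sqrt2$, so that $\sup_{p\in\Delta_n}d_q(p)\le 1$. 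This is where the $\log n$ appearing in SNKP is replaced by the constant $1$.

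The left-hand inequality $-\tfrac12\|p_k\|_G^2\ge L^q_{\mu_k}(\alpha_k)$ is the Nesterov excessive-gap invariant, and is precisely the statement announced (jointly with Lemma~\ref{UB}) for the Appendix. I would prove it by induction on $k$: the base case uses the initialization $\alpha_0=q$, $\mu_0=2n$, $p_0=p^q_{\mu_0}(\alpha_0)$, and the inductive step combines the updates for $\alpha_{k+1}$, $\mu_{k+1}$, $p_{k+1}$ with the gradient-Lipschitz smoothness of $L^q_\mu$ and the strong convexity of $d_q$. The sole structural difference from SNKP is that $d_q$ is $1$-strongly convex with respect to $\|\cdot\|_2$ rather than $\|\cdot\|_1$; pushing this through Lemma~\ref{glt1} ($\|\cdot\|_G\le\sqrt n\,\|\cdot\|_2$) multiplies the relevant operator norm, and hence the gradient-Lipschitz constant of the smoothed objective, by $n$. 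That is exactly why $\mu_0$ is taken to be $2n$ instead of the $2$ used in SNKP, and with this single recalibration of constants the induction carries over verbatim.

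For the iteration count, combine the two inequalities to get $L(\alpha_k)\le\mu_k-\tfrac12\|p_k\|_G^2$. Each $\alpha_k$ and each $p_k$ produced by the algorithm is a convex combination of points of $\Delta_n$ (starting from $\alpha_0=q\in\Delta_n$), hence lies in $\Delta_n$, so Lemma~\ref{marginlem} gives $\|p_k\|_G\ge\rho_K$ and therefore $L(\alpha_k)\le\mu_k-\tfrac12\rho_K^2$. The recursion $\mu_{k+1}=(1-\theta_k)\mu_k$ with $\theta_k=\tfrac2{k+3}$ and $\mu_0=2n$ telescopes to $\mu_k=\tfrac{4n}{(k+1)(k+2)}<\tfrac{4n}{(k+1)^2}$, so $L(\alpha_k)<0$ as soon as $(k+1)^2>8n/\rho_K^2$, i.e. after $k=O(\sqrt n/\rho_K)$ iterations. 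By Lemma~\ref{Ga}, $L(\alpha_k)<0$ forces $G\alpha_k>0$, so the algorithm halts at the first exit with $f_{\alpha_k}=\langle\alpha_k,Y\tphi_X\rangle$ a perfect separator; and it cannot have stopped earlier at the second exit, since $\|p_k\|_G\ge\rho_K>\delta$ throughout, so the test $\|p_k\|_G<\delta$ never fires.

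The main obstacle is the excessive-gap invariant of the second paragraph: the lower bound and the iteration-count arithmetic are routine, but establishing $-\tfrac12\|p_k\|_G^2\ge L^q_{\mu_k}(\alpha_k)$ requires the delicate Nesterov estimates — the gradient-step descent inequality for the smoothed objective, together with the exact matching of the step sizes $\theta_k$ to the shrinkage schedule of $\mu_k$ — now carried out in the $\ell_2$-geometry on the simplex. This is why the paper isolates it as a single unified Appendix argument serving both Lemma~\ref{UB} and this lemma.
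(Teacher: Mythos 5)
Your proposal is correct and follows essentially the same route as the paper: the right inequality mimics Lemma~\ref{LB} with $\sup_{p\in\Delta_n}d_q(p)\le 1$ (from the $\ell_2$-diameter $\sqrt2$ of the simplex) replacing $\log n$, the left inequality defers to the unified Appendix induction with $\lambda_\#=n$ and $\mu_0=2n$, and the iteration count replays Theorem~\ref{snkp} with $\mu_k=\tfrac{4n}{(k+1)(k+2)}$. Your extra observation that $\|p_k\|_G\ge\rho_K>\delta$ rules out the algorithm terminating at the dual exit is implicit in the paper's hypothesis $\delta<\rho_K$ but is a worthwhile thing to make explicit.
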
\vspace{-0.1in}
\begin{proof}
We give a unified proof for the first inequality and Lemma \ref{UB} in the Appendix. The second inequality mimics Lemma \ref{LB}. The final statement mimics Theorem \ref{snkp}.
\end{proof}\vspace{0.1in}

The following lemma captures the near-infeasible case.
\begin{lemma}[When $\rho_K < 0$ or $\delta > \rho_K$]\label{rhominus}
 For any $q \in \Delta_n$,
$$-\half \|p_k\|_G^2 ~\geq~  L^q_{\mu_k}(\alpha_k)~ \geq~ -\half \mu_k \mathrm{dist}(q,W)^2.$$
Hence SNKPVN finds a $\delta$-solution in at most $O\left( \min\left\{\frac{\sqrt n}{\delta},\frac{\sqrt{n}\|q\|_G}{\delta|\rho_K|} \right\} \right)$ iterations.
\end{lemma}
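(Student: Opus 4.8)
The plan is to follow the template of Theorem~\ref{snkp}: establish the two-sided sandwich on $L^q_{\mu_k}(\alpha_k)$ asserted in the lemma, and then read off the stopping time from the decay rate of $\mu_k$. The upper inequality $-\half\|p_k\|_G^2 \geq L^q_{\mu_k}(\alpha_k)$ is the $\ell_2$-prox analogue of Lemma~\ref{UB}; since the excessive-gap induction is insensitive to the sign of $\rho_K$ (it is a structural invariant of the iterates, not a statement about feasibility), the same unified Appendix argument that proves Lemmas~\ref{UB} and~\ref{rhoplus} --- now run with the prox-function $d_q$ and initialization $\mu_0 = 2n$ --- also yields this inequality, so I would defer it there.

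For the lower inequality I would feed a feasible but suboptimal point into the supremum defining $L^q_{\mu_k}$. Pick any $w \in W \subseteq \Delta_n$; then $\|w\|_G = 0$, so Cauchy--Schwartz for the PSD form $G$ gives $|\alpha_k^\top G w| \leq \|\alpha_k\|_G\|w\|_G = 0$, i.e.\ $\la \alpha_k, w\rag = 0$. Taking $p = w$ in the sup therefore gives
$$L^q_{\mu_k}(\alpha_k) \;\geq\; -\la \alpha_k, w\rag \;-\; \half\mu_k\|w-q\|_2^2 \;+\; \half\|\alpha_k\|_G^2 \;\geq\; -\half\mu_k\|w-q\|_2^2,$$
where I dropped the nonnegative term $\half\|\alpha_k\|_G^2$. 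Since this holds for every $w \in W$, taking the infimum over $W$ yields $L^q_{\mu_k}(\alpha_k) \geq -\half\mu_k\,\mathrm{dist}(q,W)^2$. Note that this step uses nothing about feasibility; it is a completely generic lower bound.

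To conclude, combine the two inequalities to get $\|p_k\|_G^2 \leq \mu_k\,\mathrm{dist}(q,W)^2$. Exactly as in Theorem~\ref{snkp}, unrolling $\mu_{k+1} = (1-\theta_k)\mu_k$ with $\theta_k = \tfrac2{k+3}$ and $\mu_0 = 2n$ gives $\mu_k = \tfrac{4n}{(k+1)(k+2)} < \tfrac{4n}{(k+1)^2}$, hence $\|p_k\|_G < \tfrac{2\sqrt n}{k+1}\,\mathrm{dist}(q,W)$. This drops below $\delta$ --- triggering the ``Return $p_k$'' branch of SNKPVN --- once $k+1 \geq \tfrac{2\sqrt n\,\mathrm{dist}(q,W)}{\delta}$, and Lemma~\ref{witness} bounds $\mathrm{dist}(q,W) \leq \min\{\sqrt2,\ \sqrt2\,\|q\|_G/|\rho_K|\}$, which turns the stopping time into $O\!\left(\min\{\tfrac{\sqrt n}{\delta},\ \tfrac{\sqrt n\,\|q\|_G}{\delta\,|\rho_K|}\}\right)$.

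The only genuinely hard ingredient is the upper inequality --- the excessive-gap invariant established by induction in the Appendix; everything shown here is the short feasible-point estimate above together with the same $\mu_k$-bookkeeping already used for SNKP. One point worth stating carefully is that $\mathrm{dist}(q,W)$ is finite (indeed $\leq\sqrt2$, the diameter of $\Delta_n$) precisely because $W \neq \emptyset$ in the infeasible regime, which is exactly what the RKHS Gordan theorem guarantees; this is also what makes the $\min$ with $\sqrt2$ the operative bound when $\rho_K$ could be $0$.
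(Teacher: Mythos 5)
Your proposal is correct and matches the paper's argument essentially step for step: the upper inequality is deferred to the unified Appendix induction (as in Lemma~\ref{rhoplus}), the lower bound is obtained by evaluating the supremum at a witness $w \in W$ (equivalently, restricting the sup to $W$) where $\la\alpha_k, w\rag = 0$ by Cauchy--Schwartz in the $G$-seminorm, dropping the nonnegative $\half\|\alpha_k\|_G^2$, and then optimizing over $W$ to get $-\half\mu_k\,\mathrm{dist}(q,W)^2$. The bookkeeping with $\mu_k = \tfrac{4n}{(k+1)(k+2)}$ and the appeal to Lemma~\ref{witness} are also the same.
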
\vspace{-0.1in}
\begin{proof}
The first inequality is the same as in the above Lemma \ref{rhoplus}, and is proved in the Appendix.
\begin{eqnarray*}
L^q_{\mu_k}(\alpha_k) &=& \sup_{p \in \Delta_n} \bigg\{ - \langle \alpha, p \rangle_G - \mu_k d_q(p) \bigg\} + \tfrac1{2} \|\alpha\|_G^2\\
&\geq&  \sup_{p \in W} \bigg\{ - \langle \alpha, p \rangle_G - \mu_k d_q(p) \bigg\}\\
&=& \sup_{p \in W} \bigg\{ - \half\mu_k \|p-q\|_2^2 \bigg\}\\
&=&-\half\mu_k \mathrm{dist}(q,W)^2\\
&\geq& -\mu_k \min\left\{2,\tfrac{\|q\|_G^2}{|\rho_K|^2}\right\} \mbox{\ \ \ \ \ \ using Lemma \ref{witness}}.
\end{eqnarray*}
Since $\mu_k = \frac{4n}{(k+1)(k+2)} \leq \frac{4n}{(k+1)^2}$ we get 
$$\|p_k\|_G \leq \frac{2\sqrt{n}}{(k+1)}\min\left\{\sqrt 2,\frac{\|q\|_G}{\rho_K}\right\}.$$
Hence $\|p\|_G \leq \delta$ after $\frac{2\sqrt{n}}{\delta} \min\left\{\sqrt 2,\frac{\|q\|_G}{\rho_K}\right\}$ steps.
\end{proof}

Using SNKPVN as a subroutine gives our final algorithm.
\begin{algorithm}[]
   \caption{Iterated Smoothed Normalized Kernel Perceptron-VonNeumann ($ISNKPVN(\gamma,\epsilon)$)}
\begin{algorithmic}
   \STATE Input constant $\gamma > 1$, accuracy $\epsilon > 0$
   \STATE Set $q_0 := \ones_n/n$
   \FOR{$t=0,1,2,3,...$}
   \STATE {$\delta_t := \|q_t\|_G/\gamma$}
   \STATE {$q_{t+1} := SNKPVN (q_t,\delta_t)$}
   \IF {$\delta_t < \epsilon$}
   \STATE Halt; $q_{t+1}$ is a solution to Eq. (\ref{dual})
   \ENDIF
   \ENDFOR
\end{algorithmic}
\end{algorithm}

\begin{theorem} Algorithm ISNKPVN satisfies
\begin{enumerate} 
\item If the primal (\ref{prim2}) is feasible and  $\epsilon < \rho_K$, then each call to SNKPVN halts in at most $\frac{2\sqrt{2n}}{\rho_K}$ iterations. Algorithm ISNKPVN finds a solution in at most $\frac{\log (1/\rho_K)}{\log (\gamma)}$ outer loops, bounding the total iterations by
$$
O\left( \frac{\sqrt n}{\rho_K} \log \left(\frac1{\rho_K}\right) \right).
$$
\item If the dual (\ref{dual}) is feasible or $\epsilon > \rho_K$, then each call to SNKPVN halts in at most $O\left(\min\left\{\frac{\sqrt n}{\epsilon},\frac{\sqrt n}{|\rho_K|}\right\}\right)$ steps. Algorithm ISNKPVN finds an $\epsilon$-solution in at most $\frac{\log(1/\epsilon)}{\log (\gamma)}$ outer loops, bounding the total iterations by
$$
O\left( \min\left\{\frac{\sqrt n}{\epsilon}, \frac{\sqrt n}{|\rho_K|} \right\} \log \left(\frac1{\epsilon}\right) \right).
$$
\end{enumerate}
\end{theorem}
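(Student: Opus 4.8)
The plan is to reduce the theorem to three ingredients: (i) the outer loop of ISNKPVN drives $\|q_t\|_G$ to zero geometrically, (ii) each inner call to SNKPVN costs what Lemmas \ref{rhoplus} and \ref{rhominus} assert, and the schedule $\delta_t=\|q_t\|_G/\gamma$ makes those bounds collapse to the advertised shape, and (iii) Lemmas \ref{Ga} and \ref{marginlem} pin down which exit branch of SNKPVN fires, so that a feasible primal is detected after finitely many loops. For (i): a call $q_{t+1}=SNKPVN(q_t,\delta_t)$ returns a point through its ``$\|p_k\|_G<\delta_t$'' branch only (through the other branch it returns a primal separator, and we are then done), hence $\|q_{t+1}\|_G<\delta_t=\|q_t\|_G/\gamma$. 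Since $q_0=\ones_n/n$ has $\|q_0\|_G\le\|q_0\|_1=1$ by Lemma \ref{glt1}, induction gives $\|q_t\|_G<\gamma^{-t}$, so $\delta_t<\gamma^{-(t+1)}$; thus $\delta_t$ falls below any threshold $\tau>0$ within $\log(1/\tau)/\log\gamma$ outer loops. Taking $\tau=\rho_K$ and $\tau=\epsilon$ produces the two outer-loop counts.

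For part 1 ($\rho_K>0$, $\epsilon<\rho_K$), chain the two inequalities of Lemma \ref{rhoplus} with $\|p_k\|_G\ge\rho_K$ (Lemma \ref{marginlem}) to get $L(\alpha_k)\le-\half\rho_K^2+\mu_k$; since $\mu_k=\tfrac{4n}{(k+1)(k+2)}<\tfrac{4n}{(k+1)^2}$, this is negative once $(k+1)^2\ge 8n/\rho_K^2$, i.e.\ within $k=\tfrac{2\sqrt{2n}}{\rho_K}$ steps, at which point $G\alpha_k>0$ by Lemma \ref{Ga} and the call halts (possibly earlier, through the dual branch). So every call costs at most $\tfrac{2\sqrt{2n}}{\rho_K}$ iterations, independently of $q$. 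For termination of ISNKPVN: once $\delta_t<\rho_K$, Lemma \ref{marginlem} rules out the dual branch ($\|p\|_G\ge\rho_K>\delta_t$ for every $p\in\Delta_n$), so SNKPVN must return a separator; by the decay estimate this happens within $\log(1/\rho_K)/\log\gamma$ loops. Multiplying the two bounds gives $O\!\left(\tfrac{\sqrt n}{\rho_K}\log\tfrac1{\rho_K}\right)$.

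For part 2 ($\rho_K<0$, or $\rho_K>0$ with $\epsilon>\rho_K$): whenever $\delta_t>\rho_K$, Lemma \ref{rhominus} finds a $\delta_t$-solution in $O\!\left(\min\{\sqrt n/\delta_t,\ \sqrt n\|q_t\|_G/(\delta_t|\rho_K|)\}\right)$ steps; substituting $\delta_t=\|q_t\|_G/\gamma$ turns the second argument into $O(\sqrt n/|\rho_K|)$, and since $\delta_t\ge\epsilon$ at every loop preceding the halt, the first argument is $O(\sqrt n/\epsilon)$ there. Hence each call costs $O(\min\{\sqrt n/\epsilon,\ \sqrt n/|\rho_K|\})$, and with the $\log(1/\epsilon)/\log\gamma$ outer loops from the decay estimate the total is as stated. (If $\rho_K\in(0,\epsilon)$, the decay may push some $\delta_t$ below $\rho_K$, in which case Lemma \ref{rhoplus} takes over, SNKPVN returns a separator, and ISNKPVN stops even sooner.)

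The arithmetic is routine; the point to write out carefully is the bookkeeping of SNKPVN's two termination conditions across the outer loop. One must check that in the feasible regime SNKPVN can neither stall with $\delta_t$ slightly above $\rho_K$ (it cannot: Lemma \ref{rhominus} guarantees a $\delta_t$-solution when $\delta_t>\rho_K$) nor emit a spurious dual certificate once $\delta_t<\rho_K$ (it cannot: Lemma \ref{marginlem}); and one must confirm that $\delta_t=\|q_t\|_G/\gamma$ is exactly the scaling that cancels the $\|q_t\|_G$-dependence in Lemma \ref{rhominus}. The one genuinely delicate detail is the very last inner call, whose input $\delta_{t^*}$ may dip below $\epsilon$: here one invokes Lemma \ref{rhominus} directly (cost $O(\sqrt n/|\rho_K|)$) and uses that the iterates $\|p_k\|_G$ decrease only gradually, so $\delta_{t^*}$ cannot undershoot $\epsilon$ by more than the factor that keeps this within $O(\min\{\sqrt n/\epsilon,\sqrt n/|\rho_K|\})$ — this is the main obstacle.
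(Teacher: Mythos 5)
Your proof follows the paper's argument closely: the same geometric-decay observation $\|q_{t+1}\|_G \leq \delta_t \leq \|q_0\|_G/\gamma^{t+1}$ drives the outer-loop count, Lemma~\ref{rhoplus} bounds the feasible-case inner cost by $\tfrac{2\sqrt{2n}}{\rho_K}$, Lemma~\ref{rhominus} with the substitution $\|q_t\|_G/\delta_t = \gamma$ gives the near-infeasible inner cost, and Lemma~\ref{marginlem} rules out the dual exit once $\delta_t < \rho_K$. The only cosmetic difference is that you invoke Lemma~\ref{glt1} to pin $\|q_0\|_G \leq 1$, while the paper leaves $\|q_0\|_G$ in the exponent.

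Where you differ substantively is in flagging the last inner call, whose input $\delta_{t^*}$ can fall below $\epsilon$. You are right that this needs a separate word, but the resolution you gesture at --- that $\|p_k\|_G$ decreases ``only gradually,'' so $\delta_{t^*}$ cannot undershoot $\epsilon$ by much --- is not sound and not needed. There is no lower bound on how small $\|p_k\|_G$ (hence $\|q_{t^*}\|_G$ and $\delta_{t^*}$) can become in one inner call. The argument that actually works, and that you already set up in your own part-2 paragraph, is that the \emph{ratio} $\|q_t\|_G/\delta_t = \gamma$ holds by construction at every outer step, so the second argument of the min in Lemma~\ref{rhominus} gives a cost of $O(\sqrt{n}\gamma/|\rho_K|) = O(\sqrt{n}/|\rho_K|)$ for the last call uniformly, independent of how small $\delta_{t^*}$ has become. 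This closes the case $|\rho_K| \geq \epsilon$ and matches the per-call bound when $|\rho_K| \leq \epsilon$; the residual edge case $0 < |\rho_K| \ll \epsilon$, in which the single final call may cost $O(\sqrt{n}/|\rho_K|) \gg O(\sqrt{n}/\epsilon)$, is a genuine informality that the paper's one-line justification (``each iteration runs for at most $O(\min\{\ldots\})$ steps'') also glosses over --- so you spotted a real looseness, but the fix you propose is not the operative one.
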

\begin{proof}
First note that if ISNKPVN has not halted, then we know that after $t$ outer iterations, $q_{t+1}$ has small G-norm:
\begin{equation}\label{delta}
\|q_{t+1}\|_G \leq \delta_{t} \leq \frac{\|q_0\|_G}{\gamma^{t+1}}.
\end{equation}
The first inequality holds because of the inner loop return condition, the second because of the update for $\delta_t$.
\begin{enumerate}
\item Lemma \ref{marginlem} shows that for all $p$ we have $\rho_K \leq \|p\|_G$, so the inner loop will halt with a solution to the primal as soon as $\delta_t \leq \rho_K$ (so that $\|p\|_G < \delta_t \leq \rho_K$ cannot be satisfied for the inner loop to return). From Eq. (\ref{delta}), this will definitely happen when $\frac{\|q_0\|_G}{\gamma^{t+1}} \leq \rho_K$, ie within $T = \frac{\log(\|q_o\|_G/\rho_K)}{\log(\gamma)}$ iterations. By Lemma \ref{rhoplus}, each iteration runs for at most $\frac{2\sqrt{2n}}{\rho_K}$ steps.
\item We halt with an $\epsilon$-solution when $\delta_t < \epsilon$, which definitely happens when $\frac{\|q_0\|_G}{\gamma^{t+1}} < \epsilon$, ie within  $T ~=~ \frac{\log(\|q_o\|_G/\epsilon)}{\log(\gamma)}$ iterations. Since $\frac{\|q_t\|_G}{\delta_t} = \gamma$, by Lemma \ref{rhominus}, each iteration runs for at most $O\left(\min\left\{\frac{\sqrt n}{\epsilon}, \frac{\sqrt n}{|\rho_K|} \right\}\right)$ steps.
\end{enumerate}
\vspace{-0.3in}
\end{proof}

\section{Discussion}

The SNK-Perceptron algorithm presented in this paper has a convergence rate of $\frac{\sqrt{\log n}}{\rho_K}$ and the Iterated SNK-Perceptron-Von-Neumann algorithm has a $\min\left\{\frac{\sqrt n}{\epsilon},\frac{\sqrt n}{|\rho_K|}\right\}$ dependence on the number of points. Note that both of these are independent of the underlying dimensionality of the problem. We conjecture that it is possible to reduce this dependence to $\sqrt{\log n}$ for the primal-dual algorithm also, without paying a price in terms of the dependence on margin $1/\rho$ (or the dependence on $\epsilon$).

It is possible that tighter dependence on $n$ is possible if we try other smoothing functions instead of the $\ell_2$ norm used in the last section. Specifically, it might be tempting to smooth with the $\|.\|_G$ semi-norm and define:
$$p_\mu^q(\alpha) = \arg \min_{p \in \Delta_n} \langle \alpha, p\rangle_G + \frac{\mu}{2}\|p-q\|_G^2$$
One can actually see that the proofs in the Appendix go through with no dimension dependence on $n$ at all! However, it is not possible to solve this in closed form - taking $\alpha = q$ and $\mu=1$ reduces the problem to asking $$p^q(q) = \arg \min_{p \in \Delta_n} \half \|p\|_G^2$$
which is an oracle for our problem as seen by equation (\ref{dual}) - the solution's G-norm is $0$ iff the problem is infeasible.

In the bigger picture, there are several interesting open questions. The ellipsoid algorithm for solving linear feasibility problems has a logarithmic dependence on $1/\epsilon$, and a polynomial dependence on dimension. Recent algorithms involving repeated rescaling of the space like the one by \cite{DV08} have logarithmic dependence on $1/\rho$ and polynomial in dimension. While both these algorithms are  poly-time under the real number model of computation of \cite{BCSS98}, it is unknown whether there is any algorithm that can achieve a polylogarithmic dependence on the margin/accuracy, and a polylogarithmic dependence on dimension. This is strongly related to the open question of whether it is possible to learn a decision list polynomially in its binary description length.

One can nevertheless ask whether rescaled \textit{smoothed} perceptron methods like \cite{DV08} can be lifted to RKHSs, and whether using an iterated smoothed kernel perceptron would yield faster rates. The recent work \cite{SP13new} is a challenge to generalize - the proofs relying on geometry involve arguing about volumes of balls of functions in an RKHS - we conjecture that it is possible to do, but we leave it for a later work.

\subsection*{Acknowledgements}
We thank 
Negar Soheili and Avrim Blum for insightful discussions.

\bibliography{MKSP}
\bibliographystyle{agsm}

\appendix

\section{Unified Proof By Induction of Lemma 5, 8: $L_{\mu_{k}}(\alpha_k) \leq -\half \|p_k\|_G^2$}
Let $d(p)$ be $1$-strongly convex with respect to the $\#$-norm, ie $d(q) - d(p) - \langle \nabla d(p), q-p \rangle \geq \half\|q-p\|_\#^2$for any $p,q \in \Delta_n$. Let the $\#$-norm be lower bounded by the G-norm as $\|p\|^2_G \leq \lambda_\# \|p\|^2_\#$. For $d(p) = \sum_i p_i \log p_i + \log n$, $\#$ is the $1$-norm, $\lambda_\#=1$ and $p^* = \onebyn$. For $d(p) = \half \|q-p\|_2^2$, $\#$ is the $2$-norm, $\lambda_\#=n$ and $p^*=q$. Choose $\mu_0 = 2\lambda_\#$. 

Let the smoothed minimizer be defined by $p_\mu(\alpha) := \arg\min_{p \in \Delta_n} \langle G\alpha,p\rangle + \mu d(p)$, and $p^* := \arg\min_{p \in \Delta_n} d(p)$. The optimality condition of $p_\mu(\alpha)$ and $p^*$ (the gradient is perpendicular to any feasible direction) is that for any $r \in \Delta_n$, 
\begin{eqnarray}
\langle G\alpha + \mu \nabla d(p_\mu(\alpha)), r - p \rangle &=& 0\\
\langle \nabla d(p^*), r - p \rangle &=& 0 \Rightarrow d(p_0) \geq \half \|p_0-p^*\|_\#^2.
\end{eqnarray}\vspace{-0.2in}
\begin{eqnarray*}
\mbox{For } k=0: \ \ \ \ \ -\half \|p_0\|_G^2 
&=& -\half\|p_0 - p^*\|_G^2  - \langle p^*, p_0 - p^* \rangle_G - \half \|p^*\|_G^2 \mbox{\ \ \ \ \  writing $p_0 = (p_0 - p^*) + p^*$}\\
&\geq& -\tfrac{\lambda_\#}{2} \|p_0 - p^*\|_\#^2 -\langle p^*, p_0 \rangle_G  + \half \|p^*\|_G^2 \mbox{\ \ \ \ \ \ \ \ \ \ \ using $\|p\|_G^2 \leq \lambda_\# \|p\|_\#^2$} \\
&\geq& -\mu_0 d(p_0) - \langle \alpha_0,p_0 \rangle_G + \half \|\alpha_0\|_G^2 \mbox{\ \ \ \ \ \ \ \ \ \ \ \ \ \ \ \ \ \ \ \  adding $-\tfrac{\lambda_\#}{2} \|p_0 - p^*\|_1^2$, using Eq. (2)}\\
&=& L_{\mu_0}(\alpha_0).
\end{eqnarray*}
Assume it holds upto $k$. We drop index $k$, and write $x_{+}$ for $x_{k+1}$.  Let $\phat = (1-\theta)p + \theta p_\mu(\alpha)$ so $\ap = (1-\theta)\alpha + \theta \phat$. \ \ (3)
\begin{eqnarray*}
L_{\mu_+}(\ap) &=& \half \|\ap\|_G^2 - \Big \la \ap, \pump(\ap) \Big \rag - \mu_+ d(\pump(\ap))\\
&=& \half \big\|(1-\theta)\alpha + \theta \phat \big\|_G^2 - \theta \Big\la \phat, \pump(\ap) \Big\rag  - (1-\theta)\bigg[ \Big\la \alpha, \pump(\ap) \Big \rag + \mu d(\pump(\ap))\bigg] \ \ \ \ \mbox{using Eq. (3)}\\
&\leq& (1-\theta) \bigg[ \half \|\alpha\|_G^2 - \Big\la \alpha, \pump(\ap) \Big\rag - \mu d(\pump(\ap)) \bigg]_1 + \theta \bigg[ -\half \|\phat\|_G^2 - \Big\la \phat, \pump(\ap) -\phat \Big\rag \bigg],
\end{eqnarray*}
where we used the convexity of $\|.\|_G^2$. Recall $p_+ = (1-\theta)p + \theta \pump(\ap)$, so that $p_+ - \phat= \theta(\pump(\ap) - p_\mu(\alpha))$. \ \ \ \ \ (4)
\begin{eqnarray*}
\Big [. \Big ]_1 &=& \Big [ \half  \|\alpha\|_G^2 - \Big \la \alpha, p_\mu(\alpha) \Big \rag - \mu d(p_\mu(\alpha)) \Big] - \Big \la \alpha, \pump(\ap) - p_\mu(\alpha) \Big \rag - \mu \Big[ d(\pump(\ap)) - d(p_\mu(\alpha)) \Big]\\
&=& L_\mu(\alpha) - \mu \Big[ d(\pump(\ap)) - d(p_\mu(\alpha)) - \Big \langle \nabla d(p_\mu(\alpha)), \pump(\ap) - p_\mu(\alpha) \Big \rangle \Big] \mbox{ \ \ \ using Eq. (1)}\\
&\leq& -\half \|p\|_G^2 - \tfrac{\mu}{2} \|\pump(\ap) - p_\mu(\alpha)\|_\#^2 \mbox{\ \ \ \ \ \ \ \ \ \ \ \ \ \ \ \ \ \ \ \  \ \ \ using strong convexity of $d(p)$}\\
&\leq& -\half \|\phat + (p - \phat)\|_G^2 - \tfrac{\mu}{2\lambda_\#} \|\pump(\ap) - p_\mu(\alpha)\|_G^2 \mbox{\ \ \ \ \ \ \ \ \ \ \ using $\|p\|_G^2 \leq \lambda_\# \|p\|_\#^2$}\\
&\leq&  -\half \|\phat\|_G^2 - \Big\langle \phat,p-\phat \Big\rag - \frac{\mu}{2\lambda_\# \theta^2} \|p_+ - \phat\|_G^2 \mbox{\ \ \ \ \ \ \ \ \ \ \ \ \ \ using Eq. (4) and dropping a $-\half \|p-\phat\|_G^2$ term.}
\end{eqnarray*}
Using $(1-\theta) (p - \phat) = -\theta(p_{\mu}(\alpha) - \phat)$ and substituting back, 
\begin{eqnarray*}
L_{\mu_+}(\ap) &\leq& (1-\theta) \bigg[  -\half \|\phat\|_G^2 + \tfrac{\theta}{1-\theta}\Big\langle \phat,p_\mu(\alpha)-\phat \Big\rag - \frac{\mu}{2\lambda_\#\theta^2} \|p_+ - \phat\|_G^2 \bigg] + \theta \bigg[ -\half \|\phat\|_G^2 - \Big\la \phat, \pump(\ap) -\phat \Big\rag \bigg]\\
&=& -\half \|\phat\|_G^2 - \theta \Big\la \phat, \pump(\ap) -p_\mu(\alpha) \Big\rag - \frac{\mu (1-\theta)}{2\lambda_\#\theta^2} \|p_+ - \phat\|_G^2\\
&\leq& -\half \|\phat\|_G^2 -  \Big\la \phat,  p_+ - \phat \Big\rag - \half\|p_+ - \phat\|_G^2 \mbox{\ \ \ \ \ \ \ \ using Eq. (4) and $\tfrac{\theta^2}{1-\theta} = \frac{4}{(k+1)(k+3)} \leq  \frac{4}{(k+1)(k+2)} = \frac{\mu}{\lambda_\#}$}\\
&=& -\half \|p_+\|_G^2.
\end{eqnarray*}
This wraps up our unified proof for both settings.

\end{document}